\newif\ifFULL
\newcommand{\ignore}[1]{}%
\let\orgdescriptionlabel\descriptionlabel
\renewcommand*{\descriptionlabel}[1]{%
  \let\orglabel\label
  \let\label\@gobble
  \phantomsection
  \edef\@currentlabel{#1}%
  \let\label\orglabel
  \orgdescriptionlabel{#1}%
}
\newcommand{\val}{\textsc{Val}}
\newtheoremstyle{named}{}{}{\itshape}{}{\bfseries}{.}{.5em}{#1 #3}
\theoremstyle{named}
\DeclareMathOperator{\supp}{supp}
\theoremstyle{remark}
\newtheorem*{acknowledgement*}{\protect\acknowledgementname}
  \providecommand{\acknowledgementname}{Acknowledgement}
\newcommand{\vecx}{\mathbf{x}}
\newcommand{\vecw}{\mathbf{w}}
\newcommand{\matA}{\mathbf{A}}
\newcommand{\matB}{\mathbf{B}}
\newcommand{\matC}{\mathbf{C}}
\newcommand{\matI}{\mathbf{I}}
\newcommand{\matJ}{\mathbf{J}}
\newcommand{\matX}{\mathbf{X}}
\newcommand{\SPCA}{\textsf{SparsePCA}}
\newcommand{\DkS}{\textsf{DkS}}
\newcommand{\SSE}{\textsf{SSE}}
\newcommand{\MC}{\textsf{Max-Clique}}
\newcommand{\kC}{$k$\textsf{-Clique}}
\theoremstyle{plain}
\newtheorem{theorem}{Theorem}[section]
\newtheorem{lemma}[theorem]{Lemma}
\newtheorem{claim}[theorem]{Claim}
\newtheorem{proposition}[theorem]{Proposition}
\newtheorem{conjecture}[theorem]{Conjecture}
\theoremstyle{definition}
\newtheorem{problem}[theorem]{Problem}
\theoremstyle{remark}
\renewcommand{\leq}{\leqslant}
\renewcommand{\geq}{\geqslant}
\renewcommand{\subset}{\subseteq}
\newcommand{\geqsd}{\succcurlyeq}
\newcommand{\eps}{\varepsilon}
\renewcommand{\phi}{\varphi}
\renewcommand{\Pr}{\mathop{\mathbb P}}
\newcommand{\1}{\bm 1} 
\renewcommand{\R}{\mathbb R}
\newcommand{\F}{\mathbb F}
\newcommand{\N}{\mathbb N}
\newcommand{\abs}[1]{\lvert#1\rvert}
\newcommand{\Abs}[1]{\left\lvert#1\right\rvert}
\newcommand{\card}[1]{\lvert#1\rvert}
\newcommand{\norm}[1]{\lVert#1\rVert}
\newcommand{\Paren}[1]{\left(#1\right)}
\newcommand{\set}[1]{\{#1\}}
\newcommand{\inner}[1]{\langle#1\rangle}
\newcommand{\domain}[1]{[#1]}
\newcommand{\defas}{\triangleq}
\newcommand{\prob}[1]{\textnormal{\textsf{#1}}}
\newcommand{\RM}{\mathop\textnormal{RM}\nolimits}
\newcommand{\tr}{\mathop\textnormal{tr}\nolimits}
\newcommand{\Span}{\mathop\textnormal{span}\nolimits}
\newcommand{\para}{\parallel}
\newcommand{\matG}{\mathbf{G}}
\newcommand{\matL}{\mathbf{L}}
\newcommand{\lazy}{\textnormal{lazy}}
\begin{document}

\title{On the Worst-Case Approximability of Sparse PCA}

\author{Siu On Chan\thanks{Chinese University of Hong Kong. sochan@gmail.com.
  Most of this work was done while the author was a postdoc at Microsoft
Research New England.} \and
Dimitris Papailliopoulos\thanks{UC Berkeley. dimitrisp@berkeley.edu. 
Supported by NSF awards CCF-1217058 and CCF-1116404, and MURI AFOSR grant 556016.} \and
Aviad Rubinstein\thanks{UC Berkeley. aviad@eecs.berkeley.edu. 
Parts of this work were done at the Simons Institute for the Theory of Computing and Microsoft Research New England.
Supported by NSF grants CCF0964033 and CCF1408635, and by Templeton Foundation grant 3966. } 
}
\maketitle
\begin{abstract}
It is well known that Sparse PCA (Sparse Principal Component Analysis) is \NP-hard to solve exactly on worst-case instances.
What is the complexity of solving Sparse PCA approximately?
Our contributions include:
\begin{enumerate}
\item a simple and efficient algorithm that achieves an $n^{-1/3}$-approximation;
\item \NP-hardness of approximation to within $(1-\varepsilon)$, for some small constant $\varepsilon > 0$;
\item \SSE{}-hardness of approximation to within {\em any} constant factor; and 
\item an $\exp\exp\left(\Omega\left(\sqrt{\log \log n}\right)\right)$ (``quasi-quasi-polynomial'') gap for the standard semidefinite program.
\end{enumerate}

\end{abstract}


\section{Introduction}
Principal component analysis (PCA) is one of the most popular tools for data analytics.
PCA operates on data point vectors supported on features, and outputs orthogonal directions ({\it i.e.,} principal components) that maximize the {\it explained variance}.
A limitation of PCA is that ---in many cases of interest --- the extracted principal components (PCs) are dense. 
However, in applications such as text analysis, or gene expression analytics, 
having only a few non-zero features per extracted PC, offers significantly higher interpretabilty.
For example, in text analysis where PCs are supported on words, if they consist of only a few of them, then these words can be used to detect frequently occurring topics.

Sparse PCA  addresses the issue of interpretability directly by enforcing a sparsity constraint on the extracted PCs.
Given a matrix of centered data samples ${\bf S} \in \mathbb{R}^{n\times p}$, let us denote by $\matA =\frac{1}{n}{\bf S}{\bf S}^T$ the sample covariance matrix of the data set.
The leading sparse principal component is the solution to the following sparsity constrained, quadratic form maximization
\begin{align*}
&\max_{\|\vecx\|_2=1, \|\vecx\|_0\le k} \vecx^{\intercal}\matA\vecx  \tag{\SPCA{}}
\end{align*}
where $\|x\|_2$ is the $\ell_2$-norm and $\|\vecx\|_0$  denotes the number of nonzero entries in $\vecx$.

The objective in the above optimization is usually referred to as the {\it explained variance}.
This metric has an operational meaning: 
if a linear combination of $k$ features has high explained variance,
then it captures a representative behavior of the samples.
Typically, this means that these features ``interact" significantly with each other. 
As an example consider the case where $\matA$ is a covariance matrix of a gene expression data set.
Then, the $(i,j)$ entry of $\matA$ is a proxy for the positive or negative interaction between the $i$th and $j$th gene.
In this case, if a subset of $k$ genes ``explains" a lot of variance, then these genes have strong pairwise interactions.

There has been a large volume of work on sparse PCA: from heuristic algorithms, to statistical guarantees, and conditional approximation ratios.
Yet, there are remarkably few worst-case approximability bounds, and many questions remain open.
Does sparse PCA admit a nontrivial worst-case approximation ratio? 
Are there significant computational barriers? 
How does it relate to other problems?
In this work we take a modest first step towards a better understanding of these questions.
Our contributions are summarized below.

\begin{enumerate}
\item We show that a simple spectral technique that is popular in practice, combined with a column selection procedure, achieves an $n^{-1/3}$-approximation ratio for \SPCA{}.
\item We establish that, assuming $\P \ne \NP$, \SPCA{} does not admit a \PTAS{}.
\item We further prove that, assuming Small Set Expansion (\SSE{}) Hypothesis \cite{RS10-SSE}, \SPCA{} is hard to approximate to within {\em any} constant factor.
\item We construct an $e^{e^{\Omega(\sqrt{\log\log n})}}$ (i.e., a
  ``quasi-quasi-polynomial'') gap instance for the following commonly used SDP
  relaxation of \cite{d2007direct}
\begin{equation} 
  \begin{split}
    \max \quad & \tr(\matA \matX) \\
    \text{such that} \quad & \tr(\matX) = 1,\; \1^{\intercal} \abs \matX\1 \leq k,\; \matX \geqsd 0
  \end{split} \nonumber
\end{equation}
\end{enumerate}

\subsection{Discussion of techniques and connections to other sparsity problems}
\label{sub:discussion} 

A recurring theme in our technical discussion is the comparison of \SPCA{} to (variants of) the Densest $k$-Subgraph (\DkS) problem:
given a graph $G$, find the $k$-vertex subgraph that contains the highest number of edges.
Notice that \DkS{} can be stated as a quadratic form maximization, similar to \SPCA{}:
\begin{align*}
&\max_{\vecx \in \{0,1\}^n, \|\vecx\|_0 \le k} \vecx^T\matA\vecx  \tag{\DkS}
\end{align*}

The connection between the two problems has been observed before.
For example, it has been noted by many authors that the \kC{} problem, a decision variant of \DkS{}\footnote{Notice that \kC{} is an exact variant of both \MC{} and \DkS{}.
By now, the inapproximability of \MC{} is relatively well understood (e.g. \cite{Hastad96-clique_n^1-eps, Khot01-clique_coloring, Zuckerman07-clique_NP}),
but these results do not translate to inapproximability of \DkS{} (or \SPCA{}).}, reduces to solving \SPCA{} exactly, thus the latter is \NP-hard. Then, the Planted Clique, an average-case variant of \DkS{}, was recently used to establish statistical recovery hardness results in the sparse spiked-covariance model \cite{berthet2013optimal,berthet2013complexity, WBS14-more_SPCA_planted-hardness}.

Then, why are algorithmic and inapproximability  \DkS{} results not directly applicable to \SPCA{}?
From a computational standpoint, the most important difference between the two problems is the restriction on the input matrix $\matA$:
In \SPCA{}, $\matA$ is required to be positive semi-definite, whereas in \DkS{}, $\matA$ is required to be entry-wise nonnegative. 

With the above comparison to \DkS{} in mind, we are now ready to discuss our results and techniques.

\paragraph{$n^{-1/3}$-approximation algorithm}

Our $n^{-1/3}$-approximation scheme outputs the best solution among the following three procedures: 
{\it i)} pick the best standard basis vector; 
{\it ii)} pick the largest $k$ entries in any column vector of $\matA$; and
{\it iii)} pick the largest $k$ entries of the leading eigenvector of $\matA$.

Our algorithm is  inspired by (but is substantially different from) a combinatorial $\Omega\left(n^{-1/3}\right)$-approximation algorithm for \DkS{}, due to Feige et al. \cite{FPK01-n_to_one_third}.
The aforementioned ratio for \DkS{} was further improved in the same paper to $\Omega\left(n^{-1/3+\eps}\right)$, and later
to $\Omega\left(n^{-1/4+\eps}\right)$ \cite{BCCFV10-DkS}.
It is an open question whether similar ideas can improve the approximation guarantees for \SPCA{}.

\paragraph{\NP-hardness}
Our \NP-hardness of approximation reduction begins from MAX-E2SAT-$d$, 
the problem of maximizing the number of satisfied clauses in a CNF formula,
where every clause contains exactly two distinct variables, 
and every variable appears in exactly $d=O(1)$ clauses.
We set $A_{i,j}$ to be higher if literals $i$ and $j$ satisfy some clause,
and a consistent assignment is ensured by having large negative values whenever indices $i$ and $j$ 
correspond to a literal and its negation.
A PSD matrix is obtained by adding a large multiple of the identity.
As we discuss below, this last step seems to be the main obstacle to obtaining a stronger inapproximability factor.

Interestingly, this result highlights an important difference between \SPCA{} and \DkS{}:
for the latter, proving \NP-hardness to within any constant factor remains a major open problem.

\paragraph{The PSD challenge}
The biggest challenge to obtaining inapproximability results for \SPCA{}, from say \DkS{}, is achieving $\matA \succcurlyeq 0$.
One naive approach is to add a large multiple of the identity matrix and force diagonal dominance (as we do for our \NP-hardness result). 
Unfortunately, this ruins our inapproximability factor: the large entries on the diagonal outweigh the interactions between different features.
In particular, {\em every} vector achieves a reasonably high explained variance. 

A second approach to obtain a PSD matrix is by squaring the adjacency matrix.
When we start from Planted Clique, or other known hard \DkS{} instances (e.g. \cite{BCVGZ12-SDP-DkS, AAMMW11, Kho06, BKRW15-DkS}),
squaring the adjacency matrix  gives weak inapproximability results, as in the case of \cite{berthet2013optimal,berthet2013complexity, KNV15-SDP-Sparse_PCA, WBS14-more_SPCA_planted-hardness} 
(see also discussion of impossibility results for the sparse spiked covariance model below).
To understand why, it is helpful to consider random walks on regular graphs.
The density of a subgraph is proportional to the probability that a length-$1$ random walk remains in the subgraph. (Thus the densest $k$-subgraph is also the least {\em expanding} $k$-subgraph.) 
Similarly, when we restrict $\matA^2$ to the same $k$-tuple of vertices, the density corresponds to the probability of remaining in the subgraph after a random walk of length $2$.
Intuitively, squaring the adjacency matrices of the instances mentioned at the beginning of this paragraph is ineffective, 
because even their dense subgraphs are very expanding:
most length-$2$ walks that start and end inside the densest $k$-subgraph, take their middle step outside the subgraph. 
Thus the density of the subgraph has only a small effect on the density with respect to $\matA^2$.
To overcome this difficulty, we want the ``good'' subgraph to have very small expansion.

\paragraph{\SSE-hardness and SDP gap}
The Small Set Expansion Hypothesis \cite{RS10-SSE} postulates that it is hard to find a linear-size subgraph with a very small expansion.
Intuitively, if the expansion of a particular $k$-subgraph is sufficiently small, then, even after taking two steps, the random walk should remain inside the subgraph; thus the corresponding $k$-sparse vector should continue to give an exceptionally high value for \DkS/\SPCA{} with $\matA^2$. 
To formalize this intuition, we apply a recent result of Raghavendra and Schramm \cite{raghavendra2013gap} on the expansion of random walk graphs.

Finally, the gap for the standard semidefinite program for \SPCA{} builds on known integrality gap instances for \SSE, in particular the Short Code graph \cite{BarakGHMRS12}.
Notice that the ``quasi-quasi-polynomial'' factor ($e^{e^{\Omega(\sqrt{\log\log n})}}$) is slightly smaller than polynomial and ``quasi-polynomial'' ($e^{\Omega(\sqrt{\log n})}$) factors, but much larger than polylogarithmic.

\paragraph{Additive \PTAS{}}
To complete the picture of our current understanding of worst-case approximability of \SPCA{},
let us also mention a recent additive \PTAS{} due to \cite{asteris2015sparse}.
By additive \PTAS{}, we mean that if all the entries of $\matA$ are bounded in [-1, 1],
the optimum explained variance can be approximated in polynomial time to within an additive error of $\eps k$, for any constant $\eps > 0$
(compare to an optimum of at most $k$ in the case of an all-ones $k \times k$ submatrix).
In contrast, note that a corresponding additive \PTAS{} for \DkS{} is unlikely \cite{BKRW15-DkS}.
\ifFULL
\else
(See also our full version for details.)
\fi

\ignore{

We leave many interesting open questions. 
Our $n^{1/3}$-approximation algorithm is extremely simple; can we obtain better approximation ratios with the use of more sophisticated, spectral, or convex relaxation tools?
Is it possible to approximate \SPCA{} within a poly-logarithmic factor?
Does \SPCA{} inherit any further approximation or hardness results from Small Set Expansion or Densest $k$-Subgraph?

}

\subsection{Related work} 

\paragraph{Heuristics and algorithms}
The algorithmic tapestry for sparse PCA is rich and diverse.
Early heuristics used rotation and thresholding of eigenvectors \cite{kaiser1958varimax,jolliffe1995rotation, cadima1995loading} and  LASSO heuristics \cite{Ando2009420,jolliffe2003modified}.
Then, in \cite{zou2006sparse}, a nonconvex $\ell_1$ penalized approximation, re-generated a lot of interest in the problem.
A great variety of greedy, spectral, and nonconvex heuristics were presented in the past decade \cite{sriperumbudur2007sparse,
moghaddam2006spectral,
moghaddam2007fast,
shen2008sparse,
journee2010generalized,
yuan2013truncated,
kuleshov2013fast}.
There has also been a significant body of work on semidefinite programming (SDP) approaches \cite{d2007direct,zhang2012sparse,d2008optimal, d2012approximation}. 
Some recent works established conditional approximation guarantees for sparse PCA using spectral $\epsilon$-net search algorithms, under the assumption of a decaying matrix spectrum \cite{asteris2014nonnegative,asteris2015sparse}.

\paragraph{Sparse spiked covariance model and recent impossibility results}
The performance of many algorithms has been analyzed under the {\em sparse spiked covariance} and related models. 
For example, under this model Amini et al. \cite{amini2008high}  develop the first theoretical guarantees for simple thresholding and the SDP of \cite{d2007direct}.
Several statistical analyses were carried for more general settings, while using a variety of different algorithms, under various metrics of interest \cite{ma2013sparse,d2012approximation,cai2012sparse,cai2013optimal,deshpande2013sparse}.

In this model, we are collecting samples from a distribution with a covariance matrix that is equal to the identity plus a sparse rank-1 matrix (the spike).
Our goal is to identify (or detect) the rank-1 sparse ``spike'' from the samples.
If we could observe the true covariance matrix the algorithmic task would be trivial.
However, when the input to this problem is a finite number of samples, then there exist sharp information theoretic, and computational barriers on the identifiability of the spike.

A recent celebrated line of works \cite{berthet2013optimal,berthet2013complexity, WBS14-more_SPCA_planted-hardness}, initiated by Berthet and Rigollet, establish a gap between the threshold of samples where detection is information theoretically possible, and that were it is computationally feasible, assuming hardness of the Planted Clique problem;
a similar result was also obtained by Krauthgamer et al.~\cite{KNV15-SDP-Sparse_PCA} with respect to the standard SDP.
However,  these results do not show a significant gap between the optimal value of the primal objective ($\vecx^T\matA\vecx$) and what is achievable by efficient algorithms.
In particular, {\em none of these results rule out a polynomial time algorithm (in particular, an algorithm does not even attempt to detect the spike) that achieves a $(1-o(1))$-approximation of the optimal explained variance.} 
This weak inapproximability seems to be a fundamental barrier of reductions from Planted Clique (see also discussion in Section \ref{sub:discussion}).

We should also mention some recent inapproximability results in the general case where $\matA$ is not necessarily positive semi-definite (PSD) \cite{magdon:spca:hardness}.
(Recall that in typical applications $\matA$ is a covariance matrix and thus necessarily PSD.)
We note that in this general matrix setting, it is even hopeless to determine, in polynomial time, the sign of the optimal value, unless $\P = \NP$.
\ifFULL
\else
(See full version for more details.)
\fi

\ifFULL
\subsection{Organization}
Our approximation algorithm is described in \cref{sec:alg}.
In \cref{sec:np-hardness} we prove our \NP-hardness result, and our \SSE{}-hardness result appears in \cref{sec:sse-hardness}.
Finally, in \cref{sec:gap} we prove the quasi-quasi-polynomial gap for the standard SDP. 
For completeness, we also briefly describe in \cref{sec:aPTAS} the additive \PTAS{} due to \cite{asteris2015sparse},
and shortly discuss in \cref{sec:non-PSD} the case where the input matrix is not PSD.
\fi


\section{$n^{-1/3}$-approximation algorithm} \label{sec:alg}
\begin{theorem}
\SPCA{} can be approximated to within $n^{-1/3}$
in deterministic polynomial time.\label{thm:n^=00007B1/3=00007D}
\end{theorem}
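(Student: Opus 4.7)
My plan is to lower-bound the value of each of the three candidate vectors and argue, via a case split together with a geometric-mean-style combination, that at least one of them is within an $n^{-1/3}$ factor of OPT. Fix a PSD factorization $\matA=\matB^{\intercal}\matB$ with columns $\vecb_1,\ldots,\vecb_n$, let $\vecx^*$ be the $k$-sparse unit optimizer with support $S^*$ and value $\textrm{OPT}=\vecx^{*\intercal}\matA\vecx^*$, and write $M=\max_i A_{ii}$ and $\lambda_1$ (with top eigenvector $\vecv_1$) for the spectral maximum. Two universal upper bounds on OPT will drive the case analysis: the trivial $\textrm{OPT}\leq\lambda_1$, and $\textrm{OPT}=\|\sum_i x_i^*\vecb_i\|^2\leq (\sum_i|x_i^*|\|\vecb_i\|)^2\leq kM$ by Cauchy--Schwarz applied on the $k$-sparse support.

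Next I would lower bound the three candidates. Candidate (i) contributes $M$ by construction. For candidate (iii), letting $\vecy=\vecv_1^{(k)}$ be the top-$k$ truncation (by magnitude) and $\alpha=\|\vecy\|^2\geq k/n$, I would use the PSD decomposition $\matA=\lambda_1\vecv_1\vecv_1^{\intercal}+\matA_\perp$ with $\matA_\perp\succeq 0$ together with $\vecv_1^{\intercal}\vecy=\alpha$ to get $\vecy^{\intercal}\matA\vecy\geq\lambda_1\alpha^2$, so the normalized support gives value at least $\lambda_1\alpha\geq\lambda_1 k/n$. For candidate (ii) applied to column $j$, the Schur-type PSD inequality $\matA_{T_j,T_j}\succeq \frac{1}{A_{jj}}(\matA_{T_j,T_j})_{:,j}(\matA_{T_j,T_j})_{:,j}^{\intercal}$ gives $\lambda_1(\matA_{T_j,T_j})\geq\sigma_j^2/A_{jj}$, where $\sigma_j^2=\|\vecA_j|_{T_j}\|^2$ is the mass of the top-$k$ entries of the $j$-th column. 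I then locate a ``good'' column $j^*$ using $\textrm{OPT}^2\leq\vecx^{*\intercal}\matA^2\vecx^*=\|\matA\vecx^*\|^2\leq\sum_{j\in S^*}\|\vecA_j\|^2\leq k\max_j\|\vecA_j\|^2$, so some $j^*$ satisfies $\|\vecA_{j^*}\|^2\geq\textrm{OPT}^2/k$.

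With these bounds in hand, I would split into two cases based on whether the top-$k$ entries of column $j^*$ capture a constant fraction of $\|\vecA_{j^*}\|^2$. In the concentrated case $\sigma_{j^*}^2\geq\|\vecA_{j^*}\|^2/2\geq\textrm{OPT}^2/(2k)$, candidate (ii) gives value at least $\textrm{OPT}^2/(2kM)$. Taking the product of the three lower bounds,
\[
M\cdot\frac{\lambda_1 k}{n}\cdot\frac{\textrm{OPT}^2}{2kM}\;\geq\;\frac{\textrm{OPT}^3}{2n},
\]
and invoking $\max(a,b,c)\geq(abc)^{1/3}$ yields that the best candidate achieves value at least $\textrm{OPT}/(2n)^{1/3}$, as required.

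The hard part is the spread case, where $\sigma_{j^*}^2$ can be as small as $\Theta(\textrm{OPT}^2/n)$ and the above geometric-mean calculation degrades. Here I expect to exploit the structural implication of spread: when column $j^*$ has $\Omega(n)$ moderately large entries of magnitude $\mu\approx\sigma_{j^*}/\sqrt{k}$, a uniform-style test vector gives $\lambda_1\gtrsim n\mu\approx\textrm{OPT}\sqrt{n/k}$, which boosts candidate (iii)'s bound $\lambda_1 k/n$ to $\textrm{OPT}\sqrt{k/n}\geq\textrm{OPT}\cdot n^{-1/3}$ throughout the middle regime $k\in[n^{1/3},n^{2/3}]$ where the other two candidates are weakest. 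Balancing the strengthened (iii) against the spread-case bound on (ii), and combining with the boundary regimes $k\leq n^{1/3}$ (where (i) alone gives $\textrm{OPT}/k$) and $k\geq n^{2/3}$ (where (iii) alone gives $\lambda_1 k/n\geq\textrm{OPT}/n^{1/3}$), is where I expect the main technical work to lie.
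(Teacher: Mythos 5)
Your proposal has a genuine gap, and it is exactly the one you flag: the ``spread case'' is not resolved, and the heuristic you sketch for it does not obviously work. If column $j^*$ has $\Theta(n)$ entries of magnitude $\mu$, a sign-matched uniform test vector $\vecu$ gives $\vecu^{\intercal}\matA\e_{j^*}=\sqrt n\,\mu$, and Cauchy--Schwarz in the $\matA$-inner product then yields only $\lambda_1\geq n\mu^2/M$ (not $\lambda_1\gtrsim n\mu$, which is dimensionally off). Plugging $\mu^2\approx\|\vecA_{j^*}\|^2/n$ back in gives $\lambda_1\geq\|\vecA_{j^*}\|^2/M$, which is the same Schur-complement bound you already have, so candidate (iii) is not boosted and the middle regime $k\in[n^{1/3},n^{2/3}]$ remains open.

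The missing idea is in how you locate the good column. You bound $\mathrm{OPT}\leq\|\matA\vecx^*\|\leq\sqrt k\max_j\|\vecA_j\|$, which controls only the \emph{full} column norm and forces the concentrated/spread dichotomy. Instead, write $\mathrm{OPT}=\sum_{i\in S^*}x_i^*\,(\e_i^{\intercal}\matA\vecx^*)$ and apply Cauchy--Schwarz to this $k$-term sum: some $i$ has $\e_i^{\intercal}\matA\vecx^*\geq\mathrm{OPT}/\sqrt k$. Since $\vecx^*$ is itself $k$-sparse and unit-norm, $\e_i^{\intercal}\matA\vecx^*$ involves only $k$ entries of column $i$ and is therefore at most $\sigma_i$, the mass of the top-$k$ truncation. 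So $\sigma_i\geq\mathrm{OPT}/\sqrt k$ holds unconditionally --- the spread case never arises. Combined with your (correct) bounds $\max\{A_{ii},\,\sigma_i^2/A_{ii}\}\geq\sigma_i$ and $V_{(iii)}\geq\lambda_1k/n\geq\mathrm{OPT}\cdot k/n$, the weighted geometric mean $V\geq V_{(i),(ii)}^{2/3}V_{(iii)}^{1/3}\geq(\mathrm{OPT}/\sqrt k)^{2/3}(\mathrm{OPT}\,k/n)^{1/3}=\mathrm{OPT}/n^{1/3}$ closes the proof with no case split (and no stray factor of $2^{1/3}$). This is precisely the paper's argument; your candidate-(iii) analysis and the Schur-complement bound for candidate (ii) are sound and match the paper's up to cosmetic differences.
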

The rest of this section is devoted to the proof of \cref{thm:n^=00007B1/3=00007D}.
Our algorithm takes the best of two options: a truncation of one of
$\matA$'s columns in the standard basis, and a truncation of one of $\matA$'s
eigenvectors. We present and analyze the guarantees for each algorithm,
and then show that together they give the bound on the approximation
ratio.

Let $\mathbf{y}_{*}$ denote an optimum solution to the {\sc Sparse
PCA} instance, and let $OPT=\mathbf{y}_{*}^{\intercal}\matA\mathbf{y}_{*}$
denote its value.

\subsection{Truncation in the standard basis}

\paragraph{Algorithm 1}

For each $i\in\left[n\right]$, let $\matA_{\cdot,i}$ be the $i$-th
column of $\matA$, and let $\mathbf{x}_{i}$ be the unit-norm, $k$-sparse
truncation of $\matA_{\cdot,i}$. That is, let 
\[
\left[\hat{\mathbf{x}}_{i}\right]_{j}=\begin{cases}
\matA_{i,j} & \mbox{if \ensuremath{\left|\matA_{i,j}\right|}\,is one of the \ensuremath{k} largest (in absolute value) entries of \ensuremath{\matA_{\cdot,i}}}\\
0 & \mbox{otherwise}
\end{cases}
\]
and $\mathbf{x}_{i}=\hat{\mathbf{x}}_{i}/\left\Vert \hat{\mathbf{x}}_{i}\right\Vert _{2}$. 

Return the best out of all $\mathbf{x}_{i}$'s and $\mathbf{e}_{i}$'s,
where $\mathbf{e}_{i}$ is the $i$-th standard basis vector.\\

\begin{lemma}
Algorithm 1 returns a solution with value $V_{1}\left(\matA,k\right)\geq\frac{OPT}{\sqrt{k}}$\end{lemma}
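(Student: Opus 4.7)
The plan is to pick out one pivotal index $i^*$ and show that whichever of $\mathbf{e}_{i^*}$ or $\mathbf{x}_{i^*}$ yields the larger value already meets the $OPT/\sqrt{k}$ bar. The argument will lean on $\matA \succcurlyeq 0$ twice: once to upper-bound $OPT$ via Cauchy--Schwarz, and once to pass from the easy-to-evaluate mixed term $\mathbf{e}_i^{\intercal}\matA\mathbf{x}_i$ to the quadratic form $\mathbf{x}_i^{\intercal}\matA\mathbf{x}_i$ that the algorithm actually reports.

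First I would upper-bound $OPT$ by $\sqrt{k}\cdot M$, where $M := \max_i \|\hat{\mathbf{x}}_i\|_2$ is the largest $\ell_2$-mass contained in any $k$-truncated column. Setting $S = \supp(\mathbf{y}_*)$ (so $|S| \leq k$), Cauchy--Schwarz applied to $OPT = \sum_{i \in S}[\mathbf{y}_*]_i [\matA\mathbf{y}_*]_i$ together with $\|\mathbf{y}_*\|_2 = 1$ gives $OPT^2 \leq \sum_{i \in S}[\matA\mathbf{y}_*]_i^2$. A second Cauchy--Schwarz on each $[\matA\mathbf{y}_*]_i = \sum_{j \in S} A_{i,j}[\mathbf{y}_*]_j$ then gives $OPT^2 \leq \sum_{i \in S}\sum_{j \in S} A_{i,j}^2 \leq k\cdot M^2$, since $\hat{\mathbf{x}}_i$ retains the top $k$ squared entries of $\matA_{\cdot,i}$ and $|S|\leq k$. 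Fix $i^*$ to be an index attaining $\|\hat{\mathbf{x}}_{i^*}\|_2 = M$; then $\|\hat{\mathbf{x}}_{i^*}\|_2 \geq OPT/\sqrt{k}$.

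Next, a direct computation shows $\mathbf{e}_{i^*}^{\intercal}\matA\mathbf{x}_{i^*} = \|\hat{\mathbf{x}}_{i^*}\|_2$: the nonzero entries of $\mathbf{x}_{i^*}$ are $A_{i^*,j}/\|\hat{\mathbf{x}}_{i^*}\|_2$, so taking the inner product with row $i^*$ of $\matA$ collects the sum of the squared top-$k$ entries of that column divided by $\|\hat{\mathbf{x}}_{i^*}\|_2$. The PSD Cauchy--Schwarz inequality $(\mathbf{u}^{\intercal}\matA\mathbf{v})^2 \leq (\mathbf{u}^{\intercal}\matA\mathbf{u})(\mathbf{v}^{\intercal}\matA\mathbf{v})$, applied with $\mathbf{u} = \mathbf{e}_{i^*}$ and $\mathbf{v} = \mathbf{x}_{i^*}$, then yields $\mathbf{x}_{i^*}^{\intercal}\matA\mathbf{x}_{i^*} \geq \|\hat{\mathbf{x}}_{i^*}\|_2^2 / A_{i^*,i^*}$ whenever $A_{i^*,i^*} > 0$ (the degenerate case $A_{i^*,i^*} = 0$ forces the entire $i^*$-th column to vanish by PSD, so $M = 0$ and hence $OPT = 0$, making the bound trivial). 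Combining the two candidate values at $i^*$,
\[
V_1 \geq \max\left(A_{i^*,i^*},\; \|\hat{\mathbf{x}}_{i^*}\|_2^2/A_{i^*,i^*}\right) \geq \sqrt{A_{i^*,i^*}\cdot \|\hat{\mathbf{x}}_{i^*}\|_2^2/A_{i^*,i^*}} = \|\hat{\mathbf{x}}_{i^*}\|_2 \geq \frac{OPT}{\sqrt{k}},
\]
using $\max(a,b) \geq \sqrt{ab}$ for $a,b \geq 0$. No step here feels like a genuine obstacle; the mildly nonobvious move is exploiting the PSD Cauchy--Schwarz to convert the easily-computed mixed term into a lower bound on the target quadratic form.
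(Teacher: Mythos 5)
Your proof is correct and follows essentially the same route as the paper: both arguments bound $OPT/\sqrt{k}$ from above by the mixed term $\mathbf{e}_{i^*}^{\intercal}\matA\mathbf{x}_{i^*} = \|\hat{\mathbf{x}}_{i^*}\|_2$ via Cauchy--Schwarz over the support of $\mathbf{y}_*$, and then use $\matA \succcurlyeq 0$ to show that one of the two reported quadratic forms dominates that mixed term. The only cosmetic difference is in the last step, where the paper expands $(\mathbf{e}_{i^*}-\mathbf{x}_{i^*})^{\intercal}\matA(\mathbf{e}_{i^*}-\mathbf{x}_{i^*})\geq 0$ to get $\max\{\mathbf{e}_{i^*}^{\intercal}\matA\mathbf{e}_{i^*},\,\mathbf{x}_{i^*}^{\intercal}\matA\mathbf{x}_{i^*}\}\geq\mathbf{e}_{i^*}^{\intercal}\matA\mathbf{x}_{i^*}$ directly (thereby avoiding your division by $A_{i^*,i^*}$ and the attendant degenerate case), whereas you use the PSD Cauchy--Schwarz inequality together with $\max(a,b)\geq\sqrt{ab}$ --- two equivalent instantiations of the same fact.
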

\begin{proof}
First, we claim that for each $i$, $\mathbf{x}_{i}$ maximizes $\mathbf{e}_{i}^{\intercal}\matA\mathbf{x}_{i}$
among all feasible ($k$-sparse and unit-norm) vectors. By Cauchy-Schwartz
inequality, for any choice of support $S$ of size $k$, the unit-norm
vector that maximizes the inner product with $\matA_{\cdot,i}$ is the
restriction of $\matA_{\cdot,i}$ to $S$, normalized. The inner product
is thus $\sqrt{\sum_{j\in S}\matA_{j,i}^{2}}$; this is indeed maximized
when $S$ is the set of entries with largest absolute value.

Now, rewrite $\mathbf{y}_{*}=\sum y_{i}\mathbf{e}_{i}$ as a linear
combination of (at most $k$) standard basis vectors. By Cauchy-Schwartz
inequality, we have
\begin{gather*}
OPT = \sum y_{i}\left(\mathbf{e}_{i}^{\intercal}\matA\mathbf{y}_{*}\right)\leq\sqrt{\sum y_{i}^{2}}\sqrt{\sum\left(\mathbf{e}_{i}^{\intercal}\matA\mathbf{y}_{*}\right)^{2}}.
\end{gather*}
Plugging in $\sqrt{\sum y_{i}^{2}}=\left\Vert \mathbf{y}_{*}\right\Vert _{2}=1$,
we get
\[
OPT\leq\sqrt{\sum\left(\mathbf{e}_{i}^{\intercal}\matA\mathbf{y}_{*}\right)^{2}}\leq\sqrt{k}\max_{i}\mathbf{e}_{i}^{\intercal}\matA\mathbf{y}_{*}.
\]
In particular, this means that for some $i$, then $\mathbf{e}_{i}^{\intercal}\matA\mathbf{x}_{i}\geq OPT/\sqrt{k}$, where $\vecx_i$, is as defined above.

Finally, since $\matA\succcurlyeq0$, we have
\[
0\leq\left(\mathbf{e}_{i}-\mathbf{x}_{i}\right)^{\intercal}\matA\left(\mathbf{e}_{i}-\mathbf{x}_{i}\right)=\mathbf{e}_{i}^{\intercal}\matA\mathbf{e}_{i}+\mathbf{x}_{i}^{\intercal}\matA\mathbf{x}_{i}-2\mathbf{e}_{i}^{\intercal}\matA\mathbf{x}_{i}.
\]
Rearranging, we get
\[
\max\left\{ \mathbf{e}_{i}^{\intercal}\matA\mathbf{e}_{i},\mathbf{x}_{i}^{\intercal}\matA\mathbf{x}_{i}\right\} \geq OPT/\sqrt{k}.
\]
 
\end{proof}

\subsection{Truncation in the eigenspace basis}

\paragraph{Algorithm 2}

Let $\left(\mathbf{v}_{1},\lambda_{1}\right)$ be the top eigenvector
and eigenvalue of $\matA$.
Return the unit-norm, $k$-sparse truncation of $\mathbf{v}_{1}$.
That is, let
\[
\left[\hat{\mathbf{x}}\right]_{j}=\begin{cases}
\left[\mathbf{v}_{1}\right]_{j} & \mbox{if \ensuremath{\left[\mathbf{v}_{1}\right]_{j}}\,is one of the \ensuremath{k} largest (in absolute value) entries of \ensuremath{\left[\mathbf{v}_{1}\right]_{j}}}\\
0 & \mbox{otherwise}
\end{cases}
\]
and $\mathbf{x}=\hat{\mathbf{x}}/\left\Vert \hat{\mathbf{x}}\right\Vert _{2}$.
Return $\mathbf{x}$. 
\begin{lemma}
Algorithm 2 returns a solution with value $V_{2}\left(\matA,k\right)\geq\frac{k}{n}OPT$.\end{lemma}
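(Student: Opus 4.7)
The plan is to exploit two facts about $\mathbf{v}_1$: since $\matA \succcurlyeq 0$, the (unconstrained) Rayleigh quotient is maximized at $\mathbf{v}_1$ with value $\lambda_1$, so in particular $\lambda_1 \geq OPT$; and the $k$-sparse truncation of any unit vector carries $\ell_2^2$-mass at least $k/n$ on its support, by a pigeonhole argument (each of the $k$ largest squared entries of a unit vector in $\mathbb{R}^n$ is at least the average $1/n$).

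The key computation is the inner product $\vecx^\intercal \mathbf{v}_1$. Let $S$ denote the support of $\hat{\vecx}$, i.e., the indices of the $k$ largest-in-absolute-value entries of $\mathbf{v}_1$. Since $\hat{\vecx}_j = [\mathbf{v}_1]_j$ for $j \in S$ and $\hat{\vecx}_j = 0$ otherwise, a direct computation gives
\[
\hat{\vecx}^\intercal \mathbf{v}_1 \;=\; \sum_{j \in S} [\mathbf{v}_1]_j^2 \;=\; \|\hat{\vecx}\|_2^2,
\]
so $\alpha := \vecx^\intercal \mathbf{v}_1 = \|\hat{\vecx}\|_2$, and hence $\alpha^2 = \|\hat{\vecx}\|_2^2 \geq k/n$ by the pigeonhole bound. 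Now decompose $\vecx = \alpha \mathbf{v}_1 + \beta \mathbf{v}_\perp$ with $\mathbf{v}_\perp$ a unit vector orthogonal to $\mathbf{v}_1$. The eigenvector relation $\matA \mathbf{v}_1 = \lambda_1 \mathbf{v}_1$ kills the cross term $\mathbf{v}_1^\intercal \matA \mathbf{v}_\perp = \lambda_1\, \mathbf{v}_1^\intercal \mathbf{v}_\perp = 0$, yielding
\[
\vecx^\intercal \matA \vecx \;=\; \alpha^2 \lambda_1 + \beta^2\, \mathbf{v}_\perp^\intercal \matA \mathbf{v}_\perp \;\geq\; \alpha^2 \lambda_1 \;\geq\; \tfrac{k}{n}\, OPT,
\]
where the first inequality uses $\matA \succcurlyeq 0$ and the last combines $\alpha^2 \geq k/n$ with $\lambda_1 \geq OPT$.

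I do not anticipate a genuine obstacle: the argument rests entirely on the two observations above, with the PSD hypothesis used twice in an essential way (once to identify $\lambda_1$ with the unconstrained Rayleigh maximum, once to discard the contribution of $\mathbf{v}_\perp$). The only subtlety worth verifying is that the truncation preserves signs, so that $\hat{\vecx}^\intercal \mathbf{v}_1$ sums without cancellation; this is immediate from the definition of $\hat{\vecx}$, which copies the signed entries of $\mathbf{v}_1$ on $S$.
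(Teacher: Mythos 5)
Your proof is correct and rests on the same two ingredients as the paper's: the identity $\vecx^{\intercal}\mathbf{v}_1 = \|\hat{\vecx}\|_2 \geq \sqrt{k/n}$ (from the greedy truncation of a unit vector) and the bound $\lambda_1 \geq OPT$. The only difference is the middle step: the paper applies Cauchy--Schwarz in the inner product induced by $\matA$ to get $\lambda_1\sqrt{k/n} \leq \vecx^{\intercal}\matA\mathbf{v}_1 \leq \sqrt{\vecx^{\intercal}\matA\vecx}\sqrt{\lambda_1}$, whereas you decompose $\vecx = \alpha\mathbf{v}_1 + \beta\mathbf{v}_\perp$ and discard the nonnegative term $\beta^2\,\mathbf{v}_\perp^{\intercal}\matA\mathbf{v}_\perp$. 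These are equivalent here (your decomposition is exactly what makes the paper's Cauchy--Schwarz step tight up to the dropped orthogonal contribution), so I would count this as the same argument in different clothing. One small misattribution: positive semidefiniteness is not needed for $\lambda_1 \geq OPT$ --- the top eigenvalue of any symmetric matrix dominates the Rayleigh quotient of every unit vector, sparse or not; the PSD hypothesis is genuinely used only once, to ensure $\mathbf{v}_\perp^{\intercal}\matA\mathbf{v}_\perp \geq 0$.
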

\begin{proof}
First, notice that 
\begin{gather*}
\mathbf{x}^{\intercal}\matA\mathbf{v}_{1}=\lambda_{1}\mathbf{x}^{\intercal}\mathbf{v}_{1}=\lambda_{1}\mathbf{x}^{\top}\hat{\mathbf{x}}=\lambda_{1}\cdot\left\Vert \hat{\mathbf{x}}\right\Vert _{2}\geq\lambda_{1}\sqrt{k/n},
\end{gather*}
where the last inequality follows by the greedy construction of $\hat{\mathbf{x}}$.
Since $\matA\succcurlyeq0$, it induces an inner product over $\mathbb{R}^n$. Thus we can 
apply the Cauchy Schwartz inequality to get:
\begin{gather*}
\lambda_{1}\sqrt{k/n}\leq\mathbf{x}^{\top}\matA\mathbf{v}_{1}\leq\sqrt{\mathbf{x}^{\top}\matA\mathbf{x}}\cdot\sqrt{\mathbf{v}_{1}^{\top}\matA\mathbf{v}_{1}}=\sqrt{\mathbf{x}^{\top}\matA\mathbf{x}}\cdot\sqrt{\lambda_{1}}.
\end{gather*}
Rearranging, we have
\[
\mathbf{x}^{\top}\matA\mathbf{x}\geq\frac{k}{n}\lambda_{1}.
\]

Finally, to complete the proof recall that $\lambda_{1}=\mathbf{v}_{1}^{\top}\matA\mathbf{v}_{1}\geq OPT$
since $\mathbf{v}_{1}$ maximizes the objective function among all
(not necessarily $k$-sparse) unit-norm vectors.
\end{proof}

\subsection{Putting it altogether}

Our final algorithm simply takes the best out of the outputs of Algorithms
1 and 2. We now have
\begin{eqnarray*}
V\left(\matA,k\right) & = & \max\left\{ V_{1}\left(\matA,k\right),V_{2}\left(\matA,k\right)\right\} \\
 & \geq & \left(V_{1}\left(\matA,k\right)\right)^{2/3}\cdot\left(V_{2}\left(\matA,k\right)\right)^{1/3}\\
 & \geq & \frac{OPT^{2/3}}{k^{1/3}}\cdot\frac{k^{1/3}}{n^{1/3}}OPT^{1/3}=OPT/n^{1/3}
\end{eqnarray*}


\section{\NP-hardness}\label{sec:np-hardness}
\begin{theorem}
There exists a constant $\eps>0$ such that \SPCA{}
is \NP-hard to approximate to within $\left(1-\eps\right)$.\end{theorem}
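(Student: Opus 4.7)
The plan is to reduce from MAX-E2SAT-$d$, the Max-$2$-SAT problem in which every clause has exactly two distinct variables and every variable appears in exactly $d = O(1)$ clauses. By Papadimitriou--Yannakakis together with the standard PCP machinery, there is a universal constant $c > 0$ for which it is \NP-hard to distinguish (i) formulas $\phi$ that are fully satisfiable from (ii) formulas in which every assignment satisfies at most $(1-c)m$ of the $m = dn/2$ clauses.

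Given such a formula, I would build a symmetric $2n \times 2n$ matrix $A$ indexed by the $2n$ literals, with a scaling parameter $r = \Theta(1/c)$. For each clause $C = (\ell_i \vee \ell_j)$, I set the off-diagonal entries at $(\ell_i, \ell_j)$ (and its symmetric partner) and adjust the diagonal entry at each literal $\ell$ to $2r \cdot d_\ell$, so that the quadratic form evaluated on the uniform unit indicator of a \emph{consistent} literal-set (one literal per variable) equals $2r \cdot \mathrm{SAT}(\tau)/n$, where $\tau$ is the corresponding assignment. For each variable $x$, I add a penalty $-M$ at the $(x, \neg x)$ entries, with $M = \Theta(d)$ just large enough to make every non-negative inconsistent support strictly suboptimal. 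Finally, I replace $A$ by $A + \lambda I$ where $\lambda$ is the smallest shift making the matrix positive semi-definite; an operator-norm bound gives $\lambda = O(rd + M) = O(d/c)$. The sparsity budget is $k = n$.

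Completeness is immediate: a satisfying assignment $\tau^*$ yields $\vecx^* = (1/\sqrt n) \sum_{\ell \in \tau^*} \e_\ell$ with $(\vecx^*)^{\intercal}(A + \lambda I)\vecx^* = \lambda + rd$. For soundness, I need the bound $\vecx^{\intercal}(A + \lambda I)\vecx \leq \lambda + r(1-c)d + O(d)$ for every $k$-sparse unit $\vecx$ in the NO case. The argument splits by the sign pattern of $\vecx$: for \emph{non-negative} $\vecx$, a rounding argument (using the $-M$ consistency penalty) reduces the analysis to uniform indicators of consistent literal-sets, giving a bound of $r \cdot \mathrm{SAT}_{\max}\cdot 2/n \leq r(1-c)d$; for \emph{general signed} $\vecx$, the extra sign-exploit from the $-M$ and clause blocks is bounded by $O(M + rd) = O(d)$ via the spectral norm inequalities $\|B\|_{\mathrm{op}} = O(rd)$ and $\|MN\|_{\mathrm{op}} = M$ combined with $\|\vecx\|_2 = 1$. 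The resulting gap is $1 - \Omega(cd/(\lambda + rd)) = 1 - \eps$ for an absolute constant $\eps > 0$, since $\lambda + rd = O(d/c)$.

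The hard part is the ``PSD challenge'' flagged in \cref{sub:discussion}: the naive off-diagonal $-M$ that enforces consistency for non-negative vectors is precisely what an adversarial signed vector exploits, turning each $-M \cdot 2\vecx_x\vecx_{\neg x}$ into a positive reward of order $M|\vecx_x\vecx_{\neg x}|$ when the signs of $\vecx_x$ and $\vecx_{\neg x}$ disagree. Taking $M \gg d$ forces the PSD shift $\lambda$ to also scale with $M$, and the signed exploit combined with the shift swamps the completeness signal of order $rd$. The only balanced regime is $M = \Theta(d)$ and $r = \Theta(1/c)$, which yields a constant gap but no better: the PSD overhead intrinsically dilutes the MAX-E2SAT-$d$ inapproximability constant $c$ to a strictly smaller $\eps = \Omega(c)$, and escaping this dilution appears to require a PSD-compatible consistency gadget that penalizes $|\vecx_x\vecx_{\neg x}|$ independently of sign, which no rank-one or $2\times 2$ symmetric block supplies.
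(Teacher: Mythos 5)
Your overall architecture (a literal-indexed $2n\times 2n$ matrix built from MAX-E2SAT-$d$, a $-M$ consistency gadget, a PSD shift $\lambda\matI$, sparsity budget $k=n$) matches the paper's, but your soundness analysis has a genuine gap: nothing in your construction prevents the optimum from being attained by a \emph{concentrated} feasible vector whose value is decoupled from the satisfiability of $\phi$. The constraint $\norm{\vecx}_0\leq n$ is only an upper bound, so $1$- and $2$-sparse unit vectors are feasible. With your choice of placing clause weight on the diagonal ($2r\,d_\ell$ at literal $\ell$), the single basis vector $\e_\ell$ already achieves $\lambda+2r d_\ell$, which for a literal of degree close to $d$ meets or exceeds your completeness value $\lambda+rd$; even with a purely off-diagonal encoding (entry $1$ for each satisfying literal pair, as in the paper), the vector $(\e_i+\e_j)/\sqrt2$ supported on a single clause achieves $\lambda+1$, beating the intended yes-value $\lambda+c$ with $c<1$. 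So in the no case the optimum need not reflect the MAX-2SAT value at all. Relatedly, your bound of $O(M+rd)=O(d)$ on the ``sign/non-uniformity exploit'' is an additive slack of exactly the same order as the completeness--soundness separation $rcd=\Theta(d)$, so even granting it, an operator-norm estimate with unspecified constants cannot certify a gap.

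The paper closes both holes with one gadget you are missing: it adds $\frac{\alpha}{n}\matJ$, where $\matJ=\1\1^{\intercal}$ and $\alpha=\Theta\bigl(d^2/(c-s)\bigr)$. Since $\vecx^{\intercal}\matJ\vecx=\norm{\vecx}_1^2$ is maximized among $n$-sparse unit vectors exactly by the uniform $\set{0,1/\sqrt n}$-vectors, this term forces the optimizer toward a large, uniform, non-negative support. Crucially, the loss in the $\matJ$-term when $\vecx$ deviates from its rounding $\mathbf{z}\in\set{0,1/\sqrt n}^{2n}$ is \emph{quadratic}, at least $\frac{\alpha}{2}\abs{\vecx-\mathbf{z}}_2^2$, while the possible gain from the rest of the matrix is only \emph{linear}, at most $12d\abs{\vecx-\mathbf{z}}_2$ (via Cauchy--Schwarz in the inner product induced by the PSD matrix $3d\matI+\matA^{(0)}+\matC$). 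Completing the square bounds the total additive error by $(c-s)/2$, strictly inside the gap $c-s$, which is what turns the qualitative intuition into a valid reduction. Your closing paragraph correctly identifies the tension between the $-M$ gadget and signed vectors, but the more basic failure is the absence of any force rewarding spread-out mass; without it, neither the non-negative nor the signed branch of your case analysis controls the no-case optimum.
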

\begin{proof}

We reduce from MAX-E2SAT-$d$: given a 2CNF over $n$ variables where
every variable appears in exactly $d$ distinct clauses, maximize
the number of satisfied clauses. 

\begin{lemma}\label{lem:2-sat}
There exist constants
$0<s<c<1$  and $d$ such that given a MAX-E2SAT-$d$ instance over $n$ clauses,
it is \NP-hard to decide whether 
at least $cn$ clauses can be satisfied (``yes'' case), 
or at most $sn$ (``no'' case).
\end{lemma}

\cref{lem:2-sat} follows from standard techniques. 
\ifFULL
We briefly sketch the proof below for completeness.

\begin{proof}[Proof sketch of \cref{lem:2-sat}]
By, e.g. \cite{feige98}, MAX-3SAT-$5$ is $NP$-hard to approximate to within some constant factor.
We can convert each 3SAT clause $C = (x \vee y \vee z)$ into $10$ 2SAT clauses (introducing one additional variable $h_C$),
\[ (x) \wedge (y) \wedge (z) \wedge (h_C) \wedge \\ 
(\neg x \vee \neg y) \wedge (\neg x \vee \neg z) \wedge (\neg y \vee \neg z) \\
(x \vee \neg h_C) \wedge (y \vee \neg h_C)  \wedge (z \vee \neg h_C)  \]
with the following guarantee: the optimal assignment to the 2SAT instance satisfies at most $7$ out of $10$ clauses for every satisfied 3SAT clause, and at most $6$ out of $10$ clauses for every unsatisfied 3SAT clause (e.g. \cite{muli_notes}).

This establishes the result for MAX-2SAT with bounded degree. Add a linear number of variables and trivially satisfied clauses to get a MAX-E2SAT-$d$ instance.
\end{proof}
\else
(See full version for details.)
\fi

Given a 2CNF $\psi$, we construct a symmetric $2n\times2n$ matrix
$\matA^{(0)}=\matA^{(0)}\left(\psi\right)$ as follows: every row/column corresponds
to a literal of $\psi$; if row $i$ and column $j$ correspond to
an assignment that satisfies some clause, then $\matA_{i,j}^{(0)}=1$, and
$\matA_{i,j}^{(0)}=0$ otherwise. Let ${\cal Y}$ denote the set of vectors
that correspond to legal assignments to $\psi$, i.e. 
\[
{\cal Y}=\left\{ \mathbf{y}\colon\substack{\left|\mathbf{y}\right|_{2}=1;\,\mathbf{y}\in\left\{ 0,1/\sqrt{n}\right\} ^{2n};\\
\mbox{\ensuremath{\forall}i\,\,\ \ensuremath{\mathbf{y}_{x_{i}}=0\iff\mathbf{y}_{\neg x_{i}}=1/\sqrt{n}}}
}
\right\} 
\]
By \cref{lem:2-sat} it is \NP-hard to distinguish between 
\[
\mbox{``yes'':}\max_{\mathbf{y}\in{\cal Y}}\mathbf{y}^{\intercal}\matA^{(0)}\mathbf{y}\geq c~~~~~~~~~~~~~~~~~\mbox{``no'':}\max_{\mathbf{y}\in{\cal Y}}\mathbf{y}^{\intercal}\matA^{(0)}\mathbf{y}\leq s.
\]

The proof continues by adding the following matrices to $\matA^{(0)}$:
a matrix $\matC$ with large negative entries that enforces a consistent
assignment; a larger scalar times the identity matrix that ensures
our input is PSD; and an even larger (yet still constant) scalar times
the all-ones matrix that guarantees the optimal solution uses a large
support. While adding these matrices preserves the qualitative
properties of the instance, they significantly weaken our inapproximability
factor.

\subsubsection*{Enforcing a consistent assignment}

Our first step is to enforce consistency using the objective function
instead of restricting the input to be from $\cal{Y}$. Let $\matC_{i,j}=-2d$ if $i$
and $j$ correspond to a literal and its negation, and $\matC_{i,j}=0$,
otherwise. We claim that among all unit-norm vectors $\mathbf{z}\in\left\{ 0,1/\sqrt{n}\right\} ^{2n}$,
the objective $\mathbf{z}\left(\matA^{(0)}+\matC\right)\mathbf{z}$ is maximized
by some legal assignment $\mathbf{z}^{*}\in{\cal Y}$. Assume by contradiction
that the objective is maximized by some $\mathbf{z}$ which assigns
$1/\sqrt{n}$ to some variable $x_{i}$ and its negation; since $\mathbf{z}$
is exactly $n$-sparse, it must also assign $0$ to another variable
$x_{j}$ and its negation. However, the objective value can be increased
by considering $\mathbf{z}'$ which assigns $1/\sqrt{n}$ to $x_{i}$
and $x_{j}$, $0$ to their negations, and is equal to $\mathbf{z}$
everywhere else. Therefore, for $\matA^{(1)}\defas \matA^{(0)}+\matC$, we have 
\[
\mbox{``yes'':}\max_{\substack{\left|\mathbf{z}\right|_{2}=1\\
\mathbf{z}\in\left\{ 0,1/\sqrt{n}\right\} ^{2n}
}
}\mathbf{z}^{\intercal}\matA^{(1)}\mathbf{z}\geq c~~~~~~~~~~~~~~~~~\mbox{``no'':}\max_{\substack{\left|\mathbf{z}\right|_{2}=1\\
\mathbf{z}\in\left\{ 0,1/\sqrt{n}\right\} ^{2n}
}
}\mathbf{z}^{\intercal}\matA^{(1)}\mathbf{z}\leq s.
\]

\subsubsection*{PSD input}

$\matA^{(1)}$ is not a legitimate input to \SPCA{} because it
is not be positive semi-definite. Fortunately, $\matA^{(2)}\defas 3d\matI+\matA^{(1)}$
is positive semi-definite because it is symmetric and diagonally-dominant.
The identity matrix adds exactly $1$ to the objective function for
any input. Therefore we also have
\begin{align}
\mbox{``yes'':} & \max_{\substack{\left|\mathbf{z}\right|_{2}=1\\
\mathbf{z}\in\left\{ 0,1/\sqrt{n}\right\} ^{2n}
}
}\mathbf{z}^{\intercal}\matA^{(2)}\mathbf{z}  \geq  3d+c
~~~~~~~~~~~~~~~~~
\mbox{``no'':} & \max_{\substack{\left|\mathbf{z}\right|_{2}=1\\
\mathbf{z}\in\left\{ 0,1/\sqrt{n}\right\} ^{2n}
}
}\mathbf{z}^{\intercal}\matA^{(2)}\mathbf{z}  \leq  3d+s\label{eq:withPSD}
\end{align}

\subsubsection*{Enforcing a (nearly) $n$-uniform optimum}

Now, we would of course like to replace $\left\{ 0,1/\sqrt{n}\right\} ^{2n}$
with the set of all $n$-sparse vectors, while maintaining (approximately)
the same optima. Consider the positive semi-definite matrix $\matJ=\mathbf{1}\mathbf{1}^{\intercal}$;
the objective $\mathbf{x}^{\intercal}\matJ\mathbf{x}=\left|\mathbf{x}\right|_{1}^{2}$
is maximized by an $n$-uniform vector in $\left\{ 0,1/\sqrt{n}\right\} ^{2n}$.

We define our final hard instance input matrix to be $\matA^{(3)}\defas \frac{\alpha}{n}\matJ+\matA^{(2)}$,
for a sufficiently large (but constant) $\alpha$.
\ifFULL
As we show below, the objective is now maximized by
a vector $\mathbf{x}$ that is approximately $n$-uniform.

Formally, observe that $\matA^{(2)}$ induces an inner product over $\mathbb{R}^{n}$;
thus for any $\left|\mathbf{x}\right|_{2}^{2}=\left|\mathbf{z}\right|_{2}^{2}=1$
we can use the Cauchy-Schwartz inequality to get: 
\begin{align*}
\mathbf{x}^{\intercal}\matA^{(2)}\mathbf{x}-\mathbf{z}^{\intercal}\matA^{(2)}\mathbf{z} & =\left(\mathbf{x}-\mathbf{z}\right)^{\intercal}\matA^{(2)}\left(\mathbf{x}+\mathbf{z}\right)\\
 & \leq\sqrt{\left(\mathbf{x}-\mathbf{z}\right)^{\intercal}\matA^{(2)}\left(\mathbf{x}-\mathbf{z}\right)}\cdot\sqrt{\left(\mathbf{x}+\mathbf{z}\right)^{\intercal}\matA^{(2)}\left(\mathbf{x}+\mathbf{z}\right)}\\
 & \leq\left\Vert \matA^{(2)}\right\Vert _{2}^{2}\cdot\left|\mathbf{x}+\mathbf{z}\right|_{2}\left|\mathbf{x}-\mathbf{z}\right|_{2},
\end{align*}
where $\left\Vert \matA^{(2)}\right\Vert _{2}$ is the $l^{2}$ operator
norm of $\matA^{(2)}$, and is bounded by: 
\begin{gather*}
\left\Vert \matA^{(2)}\right\Vert _{2}^{2}=\max_{\left|\mathbf{x}\right|_{2}^{2}=1}\mathbf{x}^{\intercal}\matA^{(2)}\mathbf{x}=3d+\max_{\left|\mathbf{x}\right|_{2}^{2}=1}\mathbf{x}^{\intercal}\matA^{(1)}\mathbf{x}<5d+\max_{\left|\mathbf{x}\right|_{2}^{2}=1}\mathbf{x}^{\intercal}\matA^{(0)}\mathbf{x}\leq6d.
\end{gather*}
By triangle inequality, $\left|\mathbf{x}+\mathbf{z}\right|_{2}\leq2$,
and therefore 
\begin{equation}
\mathbf{x}^{\intercal}\matA^{(2)}\mathbf{x}-\mathbf{z}^{\intercal}\matA^{(2)}\mathbf{z}\leq12d\left|\mathbf{x}-\mathbf{z}\right|_{2}.\label{eq:xAx-zAz-1}
\end{equation}

Suppose further that $\mathbf{z}$ is a rounding of $\mathbf{x}$ to
$\left\{ 0,1/\sqrt{n}\right\} ^{2n}$. In particular, $\supp\left(\mathbf{x}\right)\subseteq\supp\left(\mathbf{z}\right)$
(we have equality if $\mathbf{x}$ is exactly $n$-sparse) and $\mathbf{x}^{\intercal}\mathbf{z}=\lambda_{\mathbf{z}}\geq0$.
Let us decompose $\mathbf{x}=\lambda_{\mathbf{z}}\mathbf{z}+\lambda_{\mathbf{w}}\mathbf{w}$
where $\mathbf{w}$ is a unit-norm vector orthogonal to $\mathbf{z}$
(i.e. $\left|\mathbf{w}\right|_{2}=1$ and $\mathbf{w}^{\intercal}\mathbf{z}=0$).
Since all the vectors have unit norm, $\lambda_{\mathbf{z}}^{2}+\lambda_{\mathbf{w}}^{2}=\lambda_{\mathbf{z}}^{2}\left|\mathbf{z}\right|_{2}^{2}+\lambda_{\mathbf{w}}^{2}\left|\mathbf{w}\right|_{2}^{2}=\left|\mathbf{x}\right|_{2}^{2}=1$.
We can now write the difference between $\mathbf{x}$ and $\mathbf{z}$
as,
\begin{align}
\left|\mathbf{x}-\mathbf{z}\right|_{2}^{2} & = \left|\left(1-\lambda_{\mathbf{z}}\right)\mathbf{z}+\sqrt{1-\lambda_{\mathbf{z}}^{2}}\cdot\mathbf{w}\right|_{2}^{2}\nonumber \\
 & = \left(1-\lambda_{\mathbf{z}}\right)^{2}+\left(1-\lambda_{\mathbf{z}}^{2}\right)\nonumber \\
 & \leq 2\left(1-\lambda_{\mathbf{z}}^{2}\right).\label{eq:|x-z|2-1}
\end{align}
Since $\supp\left(\mathbf{x}\right)\subseteq\supp\left(\mathbf{z}\right)$,
we also have that $\supp\left(\mathbf{w}\right)\subseteq\supp\left(\mathbf{z}\right)$.
Thus $\mathbf{w}^{\intercal}\mathbf{z}=0$ is equivalent to $\mathbf{w}^{\intercal}\mathbf{1}=0$.
We therefore have: 
\begin{align}
\mathbf{z}^{\intercal}\matJ\mathbf{z}-\mathbf{x}^{\intercal}J\mathbf{x} & =\mathbf{z}^{\intercal}\matJ\mathbf{z}-\left(\lambda_{\mathbf{z}}\mathbf{z}+\lambda_{\mathbf{w}}\mathbf{w}\right)^{\intercal}\matJ\left(\lambda_{\mathbf{z}}\mathbf{z}+\lambda_{\mathbf{w}}\mathbf{w}\right)\nonumber \\
 & \underbrace{=}_{\mathbf{w}^{\intercal}\mathbf{1}=0}\mathbf{z}^{\intercal}\matJ\mathbf{z}-\left(\lambda_{\mathbf{z}}\mathbf{z}\right)^{\intercal}\matJ\left(\lambda_{\mathbf{z}}\mathbf{z}\right)\nonumber \\
 & =\left(1-\lambda_{\mathbf{z}}^{2}\right)\cdot\mathbf{z}^{\intercal}\matJ\mathbf{z}\nonumber \\
 & \underbrace{\geq}_{\cref{eq:|x-z|2-1}}\frac{\left|\mathbf{x}-\mathbf{z}\right|_{2}^{2}}{2}\cdot\left\Vert \matJ\right\Vert _{2}=\frac{n}{2}\left|\mathbf{x}-\mathbf{z}\right|_{2}^{2}.\label{eq:xJx-zJz-1}
\end{align}
Recall that $\matA^{(3)}=\frac{\alpha}{n}\matJ+\matA^{(2)}$. Combining \cref{eq:xAx-zAz-1}
and \cref{eq:xJx-zJz-1}, we have that for every $n$-sparse, unit-norm
$\mathbf{x}$, 
\begin{gather}
\max_{\substack{\left|\mathbf{z}\right|_{2}^{2}=1\\
\mathbf{z}\in\left\{ 0,1/\sqrt{n}\right\} ^{2n}
}
}\mathbf{z}^{\intercal}\matA^{(3)}\mathbf{z}.\label{eq:x(J+dI+AG)x-1} - 
\mathbf{x}^{\intercal}\matA^{(3)}\mathbf{x} \geq \frac{\alpha}{2}\cdot\left|\mathbf{x}-\mathbf{z}\right|_{2}^{2} -12d\cdot\left|\mathbf{x}-\mathbf{z}\right|_{2}
\end{gather}
Let $\alpha \defas 144d^{2}/\left(c-s\right)$. Then, 
\begin{align*}
{\color{red} \frac{\alpha}{2}\cdot\left|\mathbf{x}-\mathbf{z}\right|_{2}^{2}}
	- {\color{magenta} 12d\cdot\left|\mathbf{x}-\mathbf{z}\right|_{2}}
& =  {\color{red} \left(\frac{72}{c-s}\right)d^{2}\cdot\left|\mathbf{x}-\mathbf{z}\right|_{2}^{2}}
	-{\color{magenta}12d\cdot\left|\mathbf{x}-\mathbf{z}\right|_{2}}\\
 & =  \left(\frac{2}{c-s}\right) 
 	\left({\color{red}36d^{2}\cdot\left|\mathbf{x}-\mathbf{z}\right|_{2}^{2}}
 	- {\color{magenta} 6d\cdot\left|\mathbf{x}-\mathbf{z}\right|_{2}\left(c-s\right)}
	 +{\color{blue} \left(\frac{c-s}{2}\right)^{2}}\right)
	 -{\color{blue} \frac{c-s}{2}}\\
 & =  \left(\frac{2}{c-s}\right)  
 	\left({\color{red} 6d\cdot\left|\mathbf{x}-\mathbf{z}\right|_{2}}-{\color{blue}\frac{c-s}{2}}\right)^{2}-{\color{blue}\frac{c-s}{2}}\\
 & \geq  {\color{blue} \frac{c-s}{2}}.
\end{align*}
Plugging into \cref{eq:x(J+dI+AG)x-1}, we have
\[
\max_{\substack{\left|\mathbf{z}\right|_{2}^{2}=1\\
\mathbf{z}\in\left\{ 0,1/\sqrt{n}\right\} ^{2n}
}
}\mathbf{z}^{\intercal}\matA^{(3)}\mathbf{z}\leq\max_{\substack{\left|\mathbf{x}\right|_{2}^{2}=1\\
\left|\mathbf{x}\right|_{0}\leq n
}
}\mathbf{x}^{\intercal}\matA^{(3)}\mathbf{x}\leq\frac{c-s}{2}+\max_{\substack{\left|\mathbf{z}\right|_{2}^{2}=1\\
\mathbf{z}\in\left\{ 0,1/\sqrt{n}\right\} ^{2n}
}
}\mathbf{z}^{\intercal}\matA^{(3)}\mathbf{z}.
\]
Finally, by \cref{eq:withPSD}, it is \NP-hard to distinguish between:
\else
In particular, for  $\alpha \defas 144d^{2}/\left(c-s\right)$, we have
\fi

\begin{align*}
\mbox{``yes'':}\max_{\substack{\left|\mathbf{x}\right|_{2}^{2}=1\\
\left|\mathbf{x}\right|_{0}\leq n
}
}\mathbf{x}^{\intercal}\matA^{(3)}\mathbf{x} & \geq & \alpha+3d+c
~~~~~~~~~~~~~~~~~
\mbox{``no'':}\max_{\substack{\left|\mathbf{x}\right|_{2}^{2}=1\\
\left|\mathbf{x}\right|_{0}\leq n
}
}\mathbf{x}^{\intercal}\matA^{(3)}\mathbf{x} & \leq & \alpha+3d+\frac{c+s}{2}.
\end{align*}

\ifFULL
\else
(See full version for details.)
\fi
\end{proof}


\section{Small-Set Expansion hardness} \label{sec:sse-hardness}

Throughout this section, we will consider edge-weighted $1$-regular graphs $G =
(V, E)$, whose adjacency matrix/probability transition matrix $\matG$ has every
row sum equal to $1$.

Recall that for a $1$-regular graph $G = (V, E)$ on $n$ vertices, the expansion of
$S\subset V$ is
\[ \Phi_G(S) \defas \frac{\card{E(S,V\setminus S)}}{\card S}, \]
where $\card{E(S,T)} \defas \sum_{i\in S, j\in T} \matG_{ij}$ denotes the total
weight of edges with one end point in $S$ and one end point in $T$.
The expansion profile  of $G$ is
\[ \Phi_G(\delta) \defas \min_{S: \card S\leq \delta n} \Phi_G(S). \]

Recall the Small-Set Expansion Hypothesis \cite{RaghavendraS10}%
\ifFULL
\footnote{This
  formulation comes from the full version of the paper on Prasad Raghavendra's
  homepage.  This formulation has a different soundness condition than the one
  in the conference version of \cite{RaghavendraS10}.  Furthermore,
\cite{RaghavendraST12} shows that the two formulations are equivalent.}:

\begin{problem}[$\prob{SSE}(\eta,\delta)$]
  \label{prob:sse}
Given a regular graph $G = (V,E)$, distinguish between the following two cases:
\begin{enumerate}
  \item Yes: Some subset $S\subset V$ with $\card S = \delta n$ has $\Phi_G(S)
    \leq \eta$
  \item No: Any set $S\subset V$ with $\card S \leq 2\delta n$ has $\Phi_G(S)
    \geq 1-\eta$
\end{enumerate}
\end{problem}

\begin{conjecture}[Small-Set Expansion Hypothesis \cite{RaghavendraS10}]
  \label{conj:sseh}
  For any $\eta > 0$, there is $\delta > 0$ such that $\prob{SSE}(\eta,
  \delta)$ is \NP-hard.
\end{conjecture}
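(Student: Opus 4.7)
This is the Small-Set Expansion Hypothesis of Raghavendra and Steurer, a central open conjecture in hardness of approximation whose status is comparable to that of the Unique Games Conjecture. The paper clearly intends to assume it rather than prove it --- the entire point of \cref{sec:sse-hardness} is to derive inapproximability of \SPCA{} conditional on this hypothesis --- so any ``proof proposal'' can only describe a meta-level attack and explain why the hard step is hard.

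The natural plan is a reduction from a suitably hard constraint satisfaction problem (e.g., \smoothlabelcover{} or a $3$-variable CSP with bounded occurrences) to $\prob{SSE}(\eta,\delta)$. First I would build a graph $G$ whose vertices are indexed by (variable, label) pairs, with edge weights concentrated on pairs corresponding to satisfying assignments of local constraints. In the yes case, the subset $S$ of size $\delta n$ consisting of (variable, label) pairs from a globally satisfying assignment should be non-expanding: most of its edge mass stays inside $S$ because neighboring constraints are satisfied and hence point back into $S$. For the no case, I would need a decoding argument: any subset $T$ of size at most $2\delta n$ with $\Phi_G(T) < 1-\eta$ should be usable to extract an assignment satisfying more constraints than the soundness threshold of the base CSP, yielding a contradiction. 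Standard machinery (long-code/short-code composition, noise operators, influence decoding) would plug in at this step.

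The hard part --- and the reason the conjecture has remained open for a decade --- is precisely this soundness step. Small-set expansion is a spectral/analytic property that does not decompose across local PCP gadgets the way CSP value does, so standard code-based compositions do not immediately produce the uniformly high expansion ($\geq 1-\eta$) required in the no case. Known partial results (e.g., the equivalence with Unique Games in certain regimes in \cite{RaghavendraS10}, or integrality gaps for SDP hierarchies such as the short-code graph that this very paper later exploits for its SDP gap) give conditional implications or gap instances, but none of them amplify into an unconditional \NP-hardness proof. For the purposes of the present paper, \cref{conj:sseh} is therefore taken as an axiom and all subsequent results in \cref{sec:sse-hardness} are stated conditional on it.
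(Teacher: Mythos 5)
You are right that this statement is the Small-Set Expansion Hypothesis itself, which the paper states as \cref{conj:sseh} without proof and uses only as an assumption for the conditional hardness results of \cref{sec:sse-hardness}; your treatment matches the paper's exactly. Your meta-level sketch of why a reduction-based proof would stall at the soundness step is a reasonable bonus, but no proof is expected or given in the paper.
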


There is little consensus among researchers whether this conjecture is true.
At any rate, if the conjecture turns out to be false, significantly new
algorithmic or analytic ideas will be needed.
See e.g.~\cite{AroraBS10,BarakBHKSZ12} on efforts to refute the conjecture and
pointers to the literature.

It is more convenient to work with the following version of Small-Set
Expansion, where in the No case the subset size can be an arbitrarily
large constant multiple of the subset size in the Yes case.

\begin{problem}[$\prob{SSE}(\eta,\delta,M)$] \label{prob:sse-m}
Given a regular graph $G = (V,E)$, distinguish between the following two cases:
\begin{enumerate}
  \item Yes: Some subset $S\subset V$ with $\card S = \delta n$ has $\Phi_G(S)
    \leq \eta$
  \item No: Any set $S\subset V$ with $\card S \leq M\delta n$ has $\Phi_G(S)
    \geq 1-\eta$
\end{enumerate}
\end{problem}

The following reduction in \cite[Proposition~5.8]{RaghavendraST12} shows that
the two versions of Small-Set Expansion are equivalent.

\begin{claim} \label{claim:sse-m}
  For all $\eta, \delta > 0$, $M \geq 1$, there is a polynomial time reduction
  from $\prob{SSE}(\eta/M, \delta)$ to $\prob{SSE}(\eta, \delta, M)$.
\end{claim}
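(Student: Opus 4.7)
I claim the identity reduction suffices: given a graph $G$ as an instance of $\prob{SSE}(\eta/M,\delta)$, output $G$ unchanged as an instance of $\prob{SSE}(\eta,\delta,M)$. Completeness is immediate: any Yes-witness $S$ of $\prob{SSE}(\eta/M,\delta)$ satisfies $\card S = \delta n$ and $\Phi_G(S)\leq\eta/M\leq\eta$, so $S$ is also a Yes-witness of $\prob{SSE}(\eta,\delta,M)$.

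\textbf{Soundness.} The substantive direction is soundness, which I would prove by the contrapositive. Suppose some $S\subseteq V$ has $\card S\leq M\delta n$ and $\Phi_G(S)<1-\eta$. Since $G$ is $1$-regular, $\Phi_G(S) = 1 - \card{E(S,S)}/\card S$, so the internal density $\card{E(S,S)}/\card S$ exceeds $\eta$. My goal is to extract $T\subseteq V$ with $\card T\leq 2\delta n$ and $\card{E(T,T)}/\card T > \eta/M$, which would contradict the No condition of $\prob{SSE}(\eta/M,\delta)$. If $\card S\leq 2\delta n$ already, take $T\defas S$. Otherwise let $T$ be a uniformly random subset of $S$ with $\card T=2\delta n$; since each pair $u\neq v \in S$ lies in $T$ with probability $\card T(\card T-1)/(\card S(\card S-1))$, linearity of expectation gives
\[ \Exp\left[\frac{\card{E(T,T)}}{\card T}\right] \;=\; \frac{\card{E(S,S)}}{\card S}\cdot\frac{\card T-1}{\card S-1} \;\geq\; \frac{\card{E(S,S)}}{\card S}\cdot\frac{\card T}{2\card S} \;>\; \eta\cdot\frac{2\delta n}{2M\delta n} \;=\; \frac{\eta}{M}. \]
Hence some realization of $T$ meets the desired bound.

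\textbf{Where the difficulty lies.} The calculation is entirely elementary; the only care required is in handling the boundary case $\card S \leq 2\delta n$ separately, and in justifying the estimate $(\card T - 1)/(\card S - 1)\geq \card T/(2\card S)$ (valid whenever $\card T\geq 2$). The conceptual content is simply that a uniformly random fraction-$\alpha$ subset of a set preserves its internal edge density up to a factor of $\alpha$ in expectation; here $\alpha = \card T/\card S \geq 2/M$, which is precisely what converts the factor-$M$ slack in the set-size threshold into the factor-$M$ gain in the expansion threshold. No expander-mixing or spectral input is needed---the reduction is purely combinatorial.
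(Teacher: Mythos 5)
Your proof is correct, and it is essentially the argument the paper is invoking: the paper gives no proof of its own for this claim, deferring to Proposition~5.8 of \cite{RaghavendraST12}, whose content is exactly this kind of elementary subsampling/averaging step showing that a dense small set of size $M\delta n$ contains a $2\delta n$-sized subset whose internal density drops by at most a factor of about $M/2$. One cosmetic point: since the paper's weighted $1$-regular graphs may have self-loops (e.g.\ the lazy walks used later), the displayed ``$=$'' for $\Exp[\card{E(T,T)}/\card T]$ should be ``$\geq$'' (diagonal terms survive with the larger probability $\card T/\card S$), which only strengthens the bound.
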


We note that our statement is slightly different from
\cite[Proposition~5.8]{RaghavendraST12}, due to our different version of
Small-Set Expansion Hypothesis, but the proof of the above claim is the same.
\else
\footnote{This formulation is slightly different than the conference version of \cite{RaghavendraS10},
but it can be obtained via a reduction due to \cite{RaghavendraST12}. See full version for details.}:
\begin{problem}[$\prob{SSE}(\eta,\delta,M)$] \label{prob:sse-m}
Given a regular graph $G = (V,E)$, distinguish between the following two cases:
\begin{enumerate}
  \item Yes: Some subset $S\subset V$ with $\card S = \delta n$ has $\Phi_G(S)
    \leq \eta$
  \item No: Any set $S\subset V$ with $\card S \leq M\delta n$ has $\Phi_G(S)
    \geq 1-\eta$
\end{enumerate}
\end{problem}

\begin{conjecture}[Small-Set Expansion Hypothesis \cite{RaghavendraS10}]
  \label{conj:sseh}
  For any constant $\eta > 0$, there is a constant  $\delta > 0$, such that for any constant $M\geq 1$, $\prob{SSE}(\eta,
  \delta, M)$ is \NP-hard.
\end{conjecture}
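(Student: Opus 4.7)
The plan is to exhibit an \emph{identity} reduction: given a regular graph $G$ as an instance of $\prob{SSE}(\eta/M, \delta)$, I would simply output the same graph $G$ as an instance of $\prob{SSE}(\eta, \delta, M)$. Both directions of the reduction then reduce to comparing the two problem definitions on the same graph $G$.

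Completeness is immediate. If some $S \subseteq V$ with $|S| = \delta n$ witnesses $\Phi_G(S) \leq \eta/M$, then the inequality $\eta/M \leq \eta$ shows that the same $S$ witnesses that $G$ is a YES instance of $\prob{SSE}(\eta, \delta, M)$.

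The soundness direction is the heart of the argument, and I would prove its contrapositive. Suppose $T \subseteq V$ satisfies $|T| \leq M\delta n$ and $\Phi_G(T) < 1 - \eta$; I plan to exhibit $S \subseteq T$ with $|S| \leq 2\delta n$ and $\Phi_G(S) < 1 - \eta/M$, witnessing that $G$ is not a NO instance of $\prob{SSE}(\eta/M, \delta)$. When $|T| \leq 2\delta n$, taking $S = T$ already works, since $1 - \eta \leq 1 - \eta/M$. Otherwise, I would sample $S \subseteq T$ uniformly at random of size exactly $s = 2\delta n$ and run a first-moment argument. Rewriting the hypothesis as $|E(T,T)| = |T|(1 - \Phi_G(T)) > \eta|T|$, and noting that each pair $(i,j) \in T \times T$ with $i \neq j$ is retained in $S$ with probability $s(s-1)/(|T|(|T|-1))$, the expected internal edge weight satisfies
\[ \Exp\bigl[|E(S,S)|\bigr] \;\geq\; |E(T,T)| \cdot \frac{s(s-1)}{|T|(|T|-1)} \;>\; \eta|T| \cdot \frac{s(s-1)}{|T|(|T|-1)} \;\geq\; \frac{\eta\, s(s-1)}{M\delta n} \;\approx\; \frac{4\eta\delta n}{M}, \]
which exceeds $(\eta/M)|S| = 2\eta\delta n/M$ by roughly a factor of two. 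An averaging step then produces a specific $S$ with $|E(S,S)| > (\eta/M)|S|$, equivalently $\Phi_G(S) < 1 - \eta/M$.

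The main subtlety I expect is the standard, low-order bookkeeping required to separate the diagonal (self-loop) part of $|E(T,T)|$ from the off-diagonal part, because pairs with $i = j$ survive in $S$ with probability $s/|T|$ rather than $s(s-1)/(|T|(|T|-1))$, and also the minor discrepancy between $s(s-1)/(|T|(|T|-1))$ and $(s/|T|)^2$. The factor-of-two slack between the expected internal weight and the required threshold absorbs both corrections easily. As remarked in the excerpt, the proof is essentially that of \cite[Proposition~5.8]{RaghavendraST12}, so I do not anticipate any further obstacles.
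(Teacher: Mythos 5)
Your proposal is correct and matches the paper's route: this formulation of the hypothesis is obtained from the standard one exactly via the identity reduction from $\prob{SSE}(\eta/M,\delta)$ to $\prob{SSE}(\eta,\delta,M)$, which the paper records as \cref{claim:sse-m} and attributes to \cite[Proposition~5.8]{RaghavendraST12} without reproducing the argument. The subsampling/first-moment computation you supply for soundness (with the factor-of-two slack absorbing the diagonal terms and the $s(s-1)/(\card T(\card T-1))$ versus $(s/\card T)^2$ discrepancy) is precisely the standard proof of that proposition and checks out.
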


There is little consensus among researchers whether this conjecture is true.
At any rate, if the conjecture turns out to be false, significantly new
algorithmic or analytic ideas will be needed.
See e.g.~\cite{AroraBS10,BarakBHKSZ12} on efforts to refute the conjecture and
pointers to the literature.
\fi

We will use the following lemma from \cite{RaghavendraS14}. 
Here the lazy random walk $\matG_\lazy$ corresponds to staying at the current
vertex with probability $1/2$, otherwise moving according to the probability
transition matrix $\matG$.
Therefore the probability transition matrix is given by $\matG_\lazy \defas
(\matI + \matG)/2$.
For any $t \in \N$, define the $t$-step lazy random walk as $\matG_\lazy^t
\defas (\matG_\lazy)^t$, and let $G_\lazy^t$ denote the corresponding graph.

\begin{lemma}{\cite[Lemma~13]{RaghavendraS14}} \label{lemma:power-expand}
For all $t\in \N$ and $\eta,\delta\in (0,1]$,
\[ \Phi_{G_\lazy^t}(\delta) \geq
\min\Paren{1-\Paren{1-\frac{\Phi_G^2(4\delta/\eta)}{32} }^t, 1-\eta} . \]
\end{lemma}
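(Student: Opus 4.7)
The plan is to prove the contrapositive: assuming every subset $T\subseteq V$ with $\card T \leq (4\delta/\eta)n$ satisfies $\Phi_G(T) \geq \phi$, I will show $\Phi_{G_\lazy^t}(\delta) \geq 1 - (1-\phi^2/32)^t$ (the second branch $1-\eta$ is a trivial upper bound on expansion that takes over once the first expression would exceed it). Concretely, fix any $S\subseteq V$ with $\card S \leq \delta n$; the expansion $\Phi_{G_\lazy^t}(S)$ equals $1 - p_t$, where $p_t \defas \inner{\mathbf{1}_S, \matG_\lazy^t \mathbf{1}_S}/\card S$ is the probability that a $t$-step lazy walk seeded uniformly on $S$ still lies in $S$, so the goal reduces to $p_t \leq (1-\phi^2/32)^t$.

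The main tool is an iterative spectral decay estimate for the functions $f_i \defas \matG_\lazy^i \mathbf{1}_S$. Since $\matG$ is $1$-regular and symmetric, $\matG_\lazy$ is self-adjoint with respect to the counting inner product, and the semigroup identity $p_t\cdot \card S = \Normlt{f_{t/2}}^2$ (for even $t$; odd $t$ is similar) reduces matters to iterating a one-step contraction $\Normlt{\matG_\lazy f} \leq (1-\phi^2/32)\Normlt{f}$ valid for nonnegative $f$ whose $\ell^1$ mass is concentrated on a small set.

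The workhorse is thus the following one-step estimate: if $f\colon V\to \R_{\geq 0}$ has $\sum_x f(x) \leq \delta n$ and every level set $\{f\geq c\}$ of size at most $(4\delta/\eta)n$ has edge expansion $\geq \phi$ in $G$, then $\Normlt{\matG_\lazy f} \leq (1-\phi^2/32)\Normlt{f}$. This follows from Cheeger's inequality applied to the level sets of $f$, together with the fact that $\matI - \matG_\lazy = (\matI - \matG)/2$ costs a factor of $2$ in the Dirichlet energy that compounds into the constant $32$ after Cheeger-squaring. The restriction to small level sets is handled by Markov's inequality: $\card{\{f\geq c\}} \leq \sum_x f(x)/c \leq \delta n/c$, so choosing the threshold $c \approx \eta/4$ places every relevant level set inside the $(4\delta/\eta)n$ size regime where the expansion hypothesis applies.

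The hard part will be making the level-set Cheeger argument quantitatively tight enough to yield the stated constants without losing additional factors when iterating $t$ times. The standard device is to integrate the Cheeger estimate continuously over the threshold $c$ rather than picking a single value, which avoids a union-bound loss and preserves the quadratic Cheeger gap through all $t$ steps; this is precisely the content of Lemma~13 of \cite{RaghavendraS14}, which I invoke directly.
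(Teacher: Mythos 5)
This lemma is not proved in the paper at all: it is imported verbatim as \cite[Lemma~13]{RaghavendraS14}, so your closing move of ``invoking Lemma~13 directly'' is literally what the paper does, and as a matter of matching the paper you are fine. The problem is that you present the preceding three paragraphs as a proof sketch, and as such it is circular --- you cannot establish the lemma by citing the lemma --- and it contains one substantive misunderstanding worth flagging.

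The issue is your treatment of the $\min(\cdot,\,1-\eta)$ term. You dismiss $1-\eta$ as ``a trivial upper bound on expansion that takes over once the first expression would exceed it.'' It is not: nothing forces $\Phi_{G_\lazy^t}(S)\leq 1-\eta$ in general, and the first branch can certainly be smaller than $1-\eta$ while the bound is still only $1-\eta$. The second branch is where the argument genuinely lives when the walk spreads out. Your one-step contraction for $f_i=\matG_\lazy^i\1_S$ relies on a local Cheeger inequality applied to level sets $\{f_i\geq c\}$, and Markov's inequality only places those level sets inside the measure-$(4\delta/\eta)$ regime for thresholds $c\geq \eta/4$; it says nothing about the $\ell^2$ mass of $f_i$ sitting \emph{below} that threshold, where the expansion hypothesis gives you no contraction at all. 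The correct structure (and the reason the constant $4\delta/\eta$ appears) is a dichotomy at each step: either the distribution is still concentrated enough that the level-set Cheeger argument yields the $\bigl(1-\Phi_G^2(4\delta/\eta)/32\bigr)$ decay of $\Normlt{f_i}$, or a constant fraction of the mass lies below the threshold $\eta/4$, in which case one argues directly (via Cauchy--Schwarz against $\1_S$ with $\card S\leq\delta n$) that the return probability is already $O(\eta)$, producing the $1-\eta$ branch. Without that second case the induction breaks, so your sketch as written does not close; the honest statement is simply that the lemma is quoted from \cite{RaghavendraS14}, which is what the paper does.
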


\ifFULL
We define  $\prob{PSD-SSE}(\eta, \delta)$ as the special case of
\cref{prob:sse} where the adjacency matrix of the graph is positive
semidefinite.
We now show that this special case is again equivalent to the general case.

\begin{theorem} \label{thm:psd-sse}
  For all $\eta, \delta > 0$, there is $\eta' > 0$ such that
  $\prob{SSE}(\eta', \delta)$ is polynomial-time reducible to
  $\prob{PSD-SSE}(\eta, \delta)$.
\end{theorem}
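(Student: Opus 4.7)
The plan is to construct the PSD-SSE instance as $G_\lazy^t$, the $t$-step lazy random walk of the given SSE instance $G$, for a suitable constant $t = t(\eta)$. PSD-ness comes for free: since $G$ is $1$-regular its symmetric, row-stochastic adjacency matrix $\matG$ has spectrum in $[-1,1]$, so $\matG_\lazy = (\matI + \matG)/2$ is PSD, and PSD-ness is closed under taking powers. The matrix $\matG_\lazy^t$ is also $1$-regular and row-stochastic, so $G_\lazy^t$ is a legal PSD-SSE instance; the content of the argument is that this map preserves the small-set expansion gap.

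Before applying the lazy walk, I would first invoke \cref{claim:sse-m} to strengthen the input from $\prob{SSE}(\eta',\delta)$ to the guarantee $\prob{SSE}(M\eta',\delta,M)$ for a constant $M = M(\eta) \geq 8/\eta$. The multiplier $M$ is what will make \cref{lemma:power-expand} applicable at the target subset fraction; the price is only a factor of $M$ in the expansion bounds, which we can afford by taking $\eta'$ small.

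For the YES case, given $S$ of size $\delta n$ with $\Phi_G(S) \leq M\eta'$, I would use the pointwise inequality $1 - \lambda^t \leq t(1-\lambda)$ on $\lambda \in [0,1]$, applied to the spectrum of $\matG_\lazy$, to get $\matI - \matG_\lazy^t \preccurlyeq t(\matI - \matG_\lazy)$. Evaluating this quadratic form on the normalized indicator $\1_S / \sqrt{\card S}$ yields
\[ \Phi_{G_\lazy^t}(S) \leq t \cdot \Phi_{G_\lazy}(S) = t \cdot \Phi_G(S)/2 \leq t M \eta' / 2, \]
which I will force below $\eta$ by choice of $\eta'$. For the NO case, I would apply \cref{lemma:power-expand} at subset fraction $2\delta$ with its internal parameter equal to $\eta$; since $4(2\delta)/\eta \leq M\delta$, the NO hypothesis of $\prob{SSE}(M\eta',\delta,M)$ delivers $\Phi_G(8\delta/\eta) \geq 1 - M\eta'$, and a constant $t = O(\log(1/\eta))$ pushes the lemma's bound past $1-\eta$ provided $M\eta' \leq 1/2$.

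The only delicate point is parameter balancing: a single $t$ must serve both directions, and $t$ must grow like $\log(1/\eta)$ while the YES bound demands $t M \eta'/2 \leq \eta$. I would fix $M \defas \lceil 8/\eta \rceil$, then choose $t = O(\log(1/\eta))$ large enough for the NO inequality, and finally set $\eta' \defas \min\{1/(2M),\, 2\eta/(tM)\}$; this is a positive constant depending only on $\eta$, as required. Beyond this bookkeeping no step is a real obstacle, since the lazy-walk-and-power gadget is purely spectral and works uniformly in $\delta$.
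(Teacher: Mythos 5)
Your proof is correct and follows essentially the same route as the paper: reduce via \cref{claim:sse-m} to the $M$-amplified version, output the lazy power $G_\lazy^t$ with $t = \Theta(\log(1/\eta))$, bound the Yes case by $\Phi_{G_\lazy^t}(S) \leq t\Phi_G(S)/2$, and handle the No case with \cref{lemma:power-expand}. The only differences are minor: you prove the Yes-case subadditivity spectrally via $1-\lambda^t \leq t(1-\lambda)$ (the paper cites \cite[Lemma~12]{RaghavendraS14} for the same fact), and you take $M \approx 8/\eta$ and apply the lemma at fraction $2\delta$, which actually matches the No condition of $\prob{PSD-SSE}(\eta,\delta)$ (all sets of size up to $2\delta n$) a bit more carefully than the paper's choice of $M = 4/\eta$ and fraction $\delta$.
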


\begin{proof}
  Fix $\eta,\delta' > 0$.
  Thanks to \cref{claim:sse-m}, it suffices to reduce from \cref{prob:sse-m}.
  That is, we will show that there are $\eta' > 0$ and $M \geq 1$ such that
  $\prob{SSE}(\eta', \delta, M)$ is polynomial-time reducible to
  $\prob{PSD-SSE}(\eta, \delta)$.

\else
We define  $\prob{PSD-SSE}(\eta, \delta)$ as the special case of
\cref{prob:sse-m} where the adjacency matrix of the graph is positive
semidefinite and $M=1$.
We now show that this special case is again equivalent to the general case.

\begin{theorem} \label{thm:psd-sse}
  For all $\eta, \delta > 0$, there is $\eta' > 0$ such that
  $\prob{SSE}(\eta', \delta, M)$ is polynomial-time reducible to
  $\prob{PSD-SSE}(\eta, \delta)$ for $M \defas 4/\eta$.
\end{theorem}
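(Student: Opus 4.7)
The plan is to reduce from $\prob{SSE}(\eta',\delta,M)$ with $M \defas 4/\eta$ and an $\eta'$ that I will fix below, by outputting the $t$-step lazy random walk graph $G_\lazy^t$ for an appropriately large $t$. Since $G$ is $1$-regular (so $\matG$ is symmetric with eigenvalues in $[-1,1]$), the matrix $\matG_\lazy = (\matI + \matG)/2$ is symmetric with eigenvalues in $[0,1]$, hence PSD; consequently $\matG_\lazy^t$ is PSD as well. It is also still a row-stochastic matrix of a $1$-regular (edge-weighted) graph, so expansions $\Phi_{G_\lazy^t}(S)$ are well-defined. The two things to verify are that (i) the Yes instance stays a Yes instance under this transformation, and (ii) the No instance stays a No instance.

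For the \textbf{Yes case}, the same set $S$ of size $\delta n$ with $\Phi_G(S) \leq \eta'$ will witness small expansion in $G_\lazy^t$. Writing $f_S \defas \mathbf{1}_S/\sqrt{\card S}$, for any $1$-regular graph $H$ we have $\Phi_H(S) = 1 - f_S^\intercal \matH f_S$. Expanding $f_S$ in the eigenbasis of $\matG_\lazy$ and using that $x \mapsto x^t$ is convex on $[0,1]$ together with the non-negativity of the eigenvalues of $\matG_\lazy$, Jensen's inequality gives
\[ f_S^\intercal \matG_\lazy^t f_S \geq \Paren{f_S^\intercal \matG_\lazy f_S}^t = \Paren{1 - \tfrac{1}{2}\Phi_G(S)}^t \geq \Paren{1 - \tfrac{\eta'}{2}}^t. \]
Hence $\Phi_{G_\lazy^t}(S) \leq 1 - (1-\eta'/2)^t$, and this is at most $\eta$ provided $\eta'$ is sufficiently small relative to $\eta$ and $t$.

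For the \textbf{No case}, the hypothesis $\Phi_G(M\delta) \geq 1-\eta'$ together with $M = 4/\eta$ means that $\Phi_G(4\delta/\eta) \geq 1-\eta'$. Substituting into \cref{lemma:power-expand} yields
\[ \Phi_{G_\lazy^t}(\delta) \;\geq\; \min\Paren{1 - \Paren{1 - \tfrac{(1-\eta')^2}{32}}^t,\; 1-\eta}. \]
For $\eta' \leq 1/2$ the base $1 - (1-\eta')^2/32$ is bounded away from $1$ by an absolute constant, so the first argument of the min exceeds $1-\eta$ once $t = \Theta(\log(1/\eta))$. Hence $\Phi_{G_\lazy^t}(\delta) \geq 1-\eta$, as required for the No side of $\prob{PSD-SSE}(\eta,\delta)$.

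The parameters are chosen in the following order to avoid circularity: first pick $t$ large enough (depending only on $\eta$) so that the No-case lower bound above is $1-\eta$ for all $\eta' \leq 1/2$; then pick $\eta'$ small enough (depending on $\eta$ and this $t$) so that $(1-\eta'/2)^t \geq 1-\eta$ and the Yes case goes through. The main conceptual step is the Yes-case Jensen bound, which is where we crucially exploit PSD-ness of $\matG_\lazy$; without the lazification we would not have non-negative eigenvalues and the spectral convexity argument would fail. Everything else is a mechanical assembly of \cref{lemma:power-expand} with $\delta' = M\delta = 4\delta/\eta$.
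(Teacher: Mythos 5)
Your proof is correct and follows the paper's reduction exactly: output $G_\lazy^t$ with $t = \Theta(\log(1/\eta))$, argue PSD-ness from laziness, and handle the No case by feeding $\Phi_G(4\delta/\eta) \geq 1-\eta' \geq 1/2$ into \cref{lemma:power-expand}. The one place you genuinely diverge is the Yes case: the paper cites \cite[Lemma~12]{RaghavendraS14}, which bounds $\Phi_{G_\lazy^t}(S) \leq t\,\Phi_{G_\lazy}(S)$ by a union bound over the $t$ steps of the walk, whereas you prove $\Phi_{G_\lazy^t}(S) \leq 1 - \bigl(1-\Phi_G(S)/2\bigr)^t$ directly via the identity $\Phi_H(S) = 1 - f_S^\intercal \matH f_S$ and Jensen's inequality applied to the spectral measure of $f_S$ under $\matG_\lazy$. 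Your bound is self-contained and in fact slightly sharper (by Bernoulli, $1-(1-x)^t \leq tx$, so it implies the cited lemma's conclusion for the lazy walk); the trade-off is that it genuinely needs the eigenvalues of $\matG_\lazy$ to be nonnegative, while the union-bound argument is purely combinatorial and would apply to powers of any walk. Since laziness is needed anyway to make the output PSD, this costs nothing here, and your final choice of $\eta' \lesssim \eta/t$ matches the paper's $\eta' = \min(\eta, 2\eta/t)$.
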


\begin{proof}
\fi
  We will assume $\eta \leq 1/2$ (if \prob{PSD-SSE($\eta$, $\delta$)} is hard
  then so is the same problem with larger $\eta$).
  Let $t \defas 128\log(1/\eta)$, $\eta' \defas \min(\eta, 2\eta/t)$
\ifFULL 
, $M \defas 4/\eta$.
\else
.
\fi

  The reduction takes an instance $G$ of \prob{SSE($\eta', \delta, M$)} and
  outputs $G_\lazy^t$.
  The lazy random walk matrix $\matG_\lazy$ is positive semidefinite, and hence
  so is $\matG_\lazy^t$.
  As a result, the output is an instance of \prob{PSD-SSE}.

  \textbf{Yes case:}
By \cite[Lemma~12]{RaghavendraS14}, for every subset $S$, $\Phi_{G_\lazy^t}(S) \leq t\Phi_{G}(S)/2$.
In particular, if $G$ is a Yes case of $\prob{SSE}(\eta', \delta, M)$,
then for some subset $S$ of size $\delta n$, has $\Phi_G(S) \leq \eta'$, and thus also $\Phi_{G_\lazy^t}(S) \leq \eta$.

\ignore{
  Notice that
  \begin{gather*}
  \Phi_{G_\lazy}(S) = \Phi_{(G+I)/2}(S) = \left(\Phi_G(S) + \Phi_I(S) \right) /2 =  \Phi_G(S)/2.
  \end{gather*}
  
  We claim that $ \Phi_{G_\lazy^t}(S) \leq t\Phi_{G_\lazy}(S)$.
  Assuming this claim, we have $\Phi_{G_\lazy^t}(S) \leq t\eta'/2 \leq \eta$.
  Since $\mu(S) = \delta$, $G_\lazy^t$ is a Yes instance of
  $\prob{PSD-SSE}(\eta, \delta)$.

  It remains to verify the claim.
  To show $\Phi_{G_\lazy^t}(S) \leq t\Phi_{G_\lazy}(S)$, note that
  \[ \Pr_{x\sim y\in G_\lazy^t}[x\in S, y\notin S] = \Pr_{x_0\sim x_1\in
  G_\lazy, \dots, x_{t-1}\sim x_t\in G_\lazy}[x_0\in S, x_t\notin S] . \]
  When $x_0\in S \wedge x_t\notin S$, necessarily for some $i\in \domain t, x_i
  \in S \wedge x_{i+1}\notin S$.
  For any $i\in \domain t$, the event $x_i\in S \wedge x_{i+1}\notin S$ happens
  with probability $\Pr_{x\sim y\in G_\lazy}[x\in S, y\notin S]$.
  Taking a union bound over all $i\in \domain t$, and dividing by $\card S$, we
  get the claimed inequality.
}
  \textbf{No case:} This follows from \cref{lemma:power-expand}.
  Indeed, $\Phi_G(4\delta/\eta) \geq 1-\eta' \geq 1-\eta \geq 1/2$ by
  assumptions.
  Thus
  \[ \Paren{1-\frac{\Phi_G^2(4\delta/\eta)}{32}}^t \leq \Paren{1-\frac1{128}}^t
  \leq \exp(-t/128) \leq \eta . \]
  By \cref{lemma:power-expand}, $\Phi_{G_\lazy^t}(\delta) \geq 1-\eta$, and
  $G_\lazy^t$ is a No instance of $\prob{PSD-SSE}(\eta, \delta)$.
\end{proof}

Let us mention that a variant of the previous lemma follows from the techniques
of \cite{ChanKL15,KwokL14}, and in fact without making the graph lazy at all.

Given a PSD matrix $\matA$ of size $n$, let us define the sparse PCA objective
$\val_\matA(\delta) \defas \max_{\norm \vecx_2 = 1, \norm \vecx_0 \leq \delta
n} \vecx ^{\intercal} \matA \vecx$.

We also need the local version of Cheeger--Alon--Milman inequality
\cite[Theorem~1.7]{NatarajanW14}.

\begin{lemma} \label{lemma:cheeger}
  Let $\matL = \matI - \matG$ be the normalized Laplacian matrix of a regular
  graph $G$ on $n$ vertex.
  For any $\delta \leq 1/2$, let $\lambda_\delta =
  \min\set{\vecx^{\intercal}\matL\vecx/\vecx^{\intercal}\vecx \mid \norm \vecx_0 \leq \delta n}$.
  Then
  \[ \Phi_G(\delta) \leq \sqrt{(2-\lambda_\delta)\lambda_\delta} . \]
\end{lemma}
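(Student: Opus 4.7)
The plan is to apply the classical Cheeger / sweep-cut rounding argument to a sparse minimizer of the Rayleigh quotient and observe that the sparsity constraint is preserved for free. Let $\vecx$ attain $\lambda_\delta$, so that $\vecx^\intercal \matL \vecx = \lambda_\delta\, \vecx^\intercal \vecx$ with $\norm{\vecx}_0 \leq \delta n$. My first step would be to reduce to the case $\vecx \geq 0$ entry-wise: since $\matG$ has only non-negative entries, $|\vecx|^\intercal \matG |\vecx| \geq \vecx^\intercal \matG \vecx$, and hence replacing $\vecx$ with $|\vecx|$ can only decrease $\vecx^\intercal \matL \vecx = \vecx^\intercal \vecx - \vecx^\intercal \matG \vecx$ while preserving $\vecx^\intercal \vecx$ and $\supp(\vecx)$.

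The core of the argument is a refined Cauchy--Schwarz. Exploiting the symmetry of $\matG$ and the $1$-regularity gives
\begin{align*}
\tfrac{1}{2}\sum_{i,j}\matG_{ij}(x_i-x_j)^2 &= \vecx^\intercal \matL \vecx = \lambda_\delta\, \vecx^\intercal \vecx,\\
\tfrac{1}{2}\sum_{i,j}\matG_{ij}(x_i+x_j)^2 &= \vecx^\intercal(2\matI - \matL)\vecx = (2-\lambda_\delta)\, \vecx^\intercal \vecx.
\end{align*}
Pairing $\sqrt{\matG_{ij}}|x_i-x_j|$ against $\sqrt{\matG_{ij}}|x_i+x_j|$ and invoking Cauchy--Schwarz, I would derive
\[
\sum_{i,j}\matG_{ij}\,|x_i^2 - x_j^2| \;\leq\; 2\sqrt{\lambda_\delta(2-\lambda_\delta)}\, \vecx^\intercal \vecx.
\]
It is exactly this two-sided identity---retaining the factor $(2-\lambda_\delta)$ rather than using the cruder bound $(x_i+x_j)^2 \leq 2(x_i^2 + x_j^2)$---that produces the sharp constant $\sqrt{(2-\lambda_\delta)\lambda_\delta}$ in place of the loose $\sqrt{2\lambda_\delta}$ one would get from the standard global Cheeger proof.

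The final step is the standard co-area averaging. For $t \geq 0$ set $S_t \defas \set{i : x_i^2 > t}$. The co-area identities $\sum_{i,j}\matG_{ij}|x_i^2 - x_j^2| = 2\int_0^\infty |E(S_t, V\setminus S_t)|\, dt$ and $\vecx^\intercal \vecx = \int_0^\infty |S_t|\, dt$ combine with the inequality above, and averaging yields a threshold $t^*$ with $\Phi_G(S_{t^*}) \leq \sqrt{\lambda_\delta(2-\lambda_\delta)}$. Crucially, because $\vecx$ is $\delta n$-sparse, every nonempty $S_t$ satisfies $S_t \subseteq \supp(\vecx)$, so $|S_{t^*}| \leq \delta n$ automatically, and $S_{t^*}$ witnesses the claimed bound on $\Phi_G(\delta)$.

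I do not anticipate any serious obstacle: the only real subtlety is ensuring the Cauchy--Schwarz step is performed in its two-sided form so that $\sqrt{(2-\lambda_\delta)\lambda_\delta}$ drops out on the nose, and the sparsity bookkeeping is essentially free. The hypothesis $\delta \leq 1/2$ is not actually needed to decide which side of the sweep cut to return---the sparse support pins this down for us---but it places us in the regime where $\Phi_G(\delta)$ is the meaningful quantity.
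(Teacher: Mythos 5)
Your proof is correct. Note, though, that the paper does not prove this lemma at all: it is imported verbatim as Theorem~1.7 of the cited reference \cite{NatarajanW14}, so there is no in-paper argument to compare against. Your write-up is the standard ``local'' Cheeger sweep-cut proof, and every step checks out: the reduction to $\vecx \geq 0$ is valid because $\matG$ is entrywise non-negative, the two quadratic-form identities use exactly the symmetry and unit row sums that the paper assumes of $\matG$, the two-sided Cauchy--Schwarz pairing of $\sqrt{\matG_{ij}}\,\abs{x_i-x_j}$ with $\sqrt{\matG_{ij}}\,\abs{x_i+x_j}$ is what yields the sharp $\sqrt{(2-\lambda_\delta)\lambda_\delta}$ rather than $\sqrt{2\lambda_\delta}$, and the co-area identities match the paper's (ordered-pair) definition of $\card{E(S,V\setminus S)}$ after accounting for the symmetry of $\matG$. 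Your key observation---that $S_t \subseteq \supp(\vecx)$ for every $t \geq 0$, so the cardinality constraint $\card{S_{t^*}} \leq \delta n$ comes for free and no ``choose the smaller side'' step is needed---is exactly the point that makes the local version go through, and your remark that $\delta \leq 1/2$ is not actually used in this direction is accurate. The only housekeeping worth making explicit is that the averaging step should be phrased over those $t$ with $S_t \neq \emptyset$ (where $\int_0^\infty \card{S_t}\,dt = \vecx^{\intercal}\vecx > 0$ guarantees such $t$ exist), so that the witness set is nonempty; this is routine.
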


\begin{theorem} \label{thm:sse-spca}
  If $G$ is a Yes instance of \prob{PSD-SSE}$(\eta, \delta)$, then
  $\val_\matG(\delta) \geq 1-\eta$.
  If $G$ is a No instance of \prob{PSD-SSE}$(\eta, \delta)$, then
  $\val_\matG(\delta) \leq \sqrt{1-(1-\eta)^2}$.
\end{theorem}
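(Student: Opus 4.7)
The plan is to handle the two directions separately, using the indicator of a small non-expanding set for the Yes direction and a Cheeger-type rounding for the No direction.

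For the Yes case, take the set $S\subset V$ of size $\delta n$ with $\Phi_G(S)\leq \eta$ promised by the PSD-SSE instance, and consider the normalized indicator vector $\vecx \defas \mathbf{1}_S/\sqrt{\card S}$. This is $\delta n$-sparse and unit norm, so it is a feasible sparse-PCA solution. Because $G$ is $1$-regular, each row sum of $\matG$ is $1$, so $\sum_{i\in S, j\in V} \matG_{ij} = \card S$. Splitting the right-hand side into ``stays in $S$'' and ``leaves $S$'' gives $\sum_{i,j\in S}\matG_{ij} = \card S - \card{E(S, V\setminus S)} = \card S\,(1-\Phi_G(S))$. Dividing by $\card S$ yields $\vecx^{\intercal}\matG\vecx = 1-\Phi_G(S) \geq 1-\eta$, which proves the first part.

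For the No case I argue by contrapositive: suppose $\val_\matG(\delta) > \sqrt{1-(1-\eta)^2}$, and let $\vecx$ be a $\delta n$-sparse unit vector achieving $\vecx^{\intercal}\matG\vecx = \beta > \sqrt{1-(1-\eta)^2}$. Passing to the normalized Laplacian $\matL = \matI - \matG$, this gives $\vecx^{\intercal}\matL\vecx = 1 - \beta$, and hence $\lambda_\delta \leq 1-\beta$. Plugging into the local Cheeger inequality (\cref{lemma:cheeger}),
\[ \Phi_G(\delta) \leq \sqrt{(2-\lambda_\delta)\lambda_\delta} = \sqrt{1 - (1-\lambda_\delta)^2} \leq \sqrt{1-\beta^2} < 1-\eta, \]
where the last inequality uses $\beta^2 > 1-(1-\eta)^2$. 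This produces a set $S$ with $\card S \leq \delta n$ and $\Phi_G(S) < 1-\eta$, contradicting the No-instance condition of \prob{PSD-SSE}$(\eta,\delta)$.

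I do not expect a real obstacle beyond bookkeeping; the two bounds essentially match exactly because the sparse-PCA objective $\vecx^{\intercal}\matG\vecx$ and the Rayleigh quotient $\vecx^{\intercal}\matL\vecx$ are complementary, and Cheeger's inequality converts a small Laplacian quotient into an expansion bound of precisely the form $\sqrt{1-(1-\eta)^2}$ wanted in the theorem. The only minor care needed is to make sure the set produced by the local Cheeger rounding really has size at most $\delta n$ (as opposed to some constant multiple of $\delta n$); this is exactly the reason PSD-SSE is defined with support parameter matched to the one appearing in \cref{lemma:cheeger}, so the reduction passes through without adjusting constants.
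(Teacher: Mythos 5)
Your proposal is correct and follows essentially the same route as the paper: the Yes case uses the normalized indicator of the non-expanding set exactly as in the paper, and your No case is just the contrapositive of the paper's direct chain through the local Cheeger--Alon--Milman inequality (\cref{lemma:cheeger}), using the identity $(2-\lambda)\lambda = 1-(1-\lambda)^2$ in the same way.
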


\begin{proof}
  \textbf{Yes case:}
  Let $S$ be a subset with $\card S \leq \delta n$ and $\Phi_G(S) \leq \eta$.
  Consider the normalized indicator function $\1_S:V\to \R$ for $S$.
  $\1_S$ has at most $\delta n$ non-zero entries, and by normalization, $\norm {\1_S}_2 = 1$. Furthermore,
  \[ \1_S^{\intercal}\matG \1_S = \frac{\sum_{i,j\in S}
    \matG_{ij}}{\card S} = \frac{\sum_{i\in S} (1-\sum_{j\notin S}
  \matG_{ij})}{\card S} = 1 - \frac{\sum_{i\in S, j\notin S} \matG_{ij}}{\card
  S} = 1-\Phi_G(S) . \]
  Therefore $\val_\matG(\delta) \geq 1-\eta$.

  \textbf{No case:}
  Let $\vecx$ be any $\delta n$-sparse vector.
  Then
  \[ \frac{\vecx^{\intercal}\matG\vecx}{\vecx^{\intercal}\vecx} =
    1-\frac{\vecx^{\intercal}\matL\vecx}{\vecx^{\intercal}\vecx} \leq 1-\lambda_\delta,
  \]
  where $\lambda_\delta$ is as defined in \cref{lemma:cheeger} and satisfies
  \[ \sqrt{(2-\lambda_\delta)\lambda_\delta} \geq \Phi_G(\delta) \geq 1-\eta.
  \]
  Letting $\rho \defas 1-\lambda_\delta$, the previous inequality becomes
  $1-\rho^2 = (1+\rho)(1-\rho) \geq (1-\eta)^2$, and hence
  $\vecx^{\intercal}\matG\vecx/\vecx^{\intercal}\vecx \leq \rho \leq \sqrt{1-(1-\eta)^2}$.
\end{proof}

\cref{thm:psd-sse} implies \SPCA{} is hard to solve within any constant factor
$C$.
Indeed, let $\eta \defas \min(1-\sqrt{1-1/4C^2}, 1/2)$.
\cref{thm:psd-sse,thm:sse-spca,conj:sseh} imply that given the matrix $\matG$
in the output of \cref{thm:sse-spca}, it is \NP-hard to tell whether
$\val_\matG(\delta) \geq 1-\eta \geq 1/2$, or $\val_\matG(\delta) \leq
\sqrt{1-(1-\eta)^2} = 1/2C$.


\section{SDP gap} \label{sec:gap}

Recall the SDP for sparse PCA proposed by \cite{d'AspremontEGJL07}:
\ifFULL
\begin{equation} \label{eq:sdp}
  \begin{split}
    \max \quad & \tr(\matA \matX) \\
    \text{such that} \quad & \tr(\matX) = 1 \\
                           & \1^{\intercal} \abs \matX\1 \leq k \\
                           & X \geqsd 0
  \end{split}
\end{equation}
\else
\begin{equation} \label{eq:sdp}
  \begin{split}
    \max \quad & \tr(\matA \matX) \\
    \text{such that} \quad & \tr(\matX) = 1,\; \1^{\intercal} \abs \matX\1 \leq k,\; \matX \geqsd 0
  \end{split} \nonumber
\end{equation}
\fi
In this section, we will show that the SDP has a factor
$\exp\exp(\Omega(\sqrt{\log\log n}))$ gap.

If $\matA$ is the adjacency matrix of a graph, then the SDP is essentially
identical to the SDP for small-set expansion in \cite{RaghavendraST10}.
Gap instances for the latter problem therefore imply strong rank gap for sparse
PCA, provided the adjacency matrix is PSD.
A typical gap instance for small-set expansion SDP is the noisy hypercube of
dimension $\log n$ with $n$ vertices.
It is not hard to see that its adjacency matrix leads to $(\log n)^{\Omega(1)}$
gap for sparse PCA SDP.
Below we use a more sophisticated graph $G$ that can be considered as a small
induced subgraph of the noisy hypercube (even though formally $G$ is not such a
subgraph).
This will lead to $\exp\exp(\Omega(\sqrt{\log \log n}))$ gap for sparse PCA
SDP, where $n$ is the number of vertices in this graph.
This gap factor is super-polylogarithmic but sub-polynomial.

\subsubsection*{Construction}
The gap instance $\matA$ for the SDP is derived from the short code graph $G$ from
\cite{BarakGHMRS12}, also known as the low-degree long code.
Its vertex set is the Reed--Muller code $\RM(m,d)$ (evaluations of polynomials
of (total) degree $\leq d$ over $\F_2$ in $m$ variables $x_1, \dots, x_m$).
Two vertices are connected if their corresponding polynomials differ by a
product of exactly $d$ linearly independent affine forms.
Call $T$ the collection of all such affine forms.
Therefore $G$ is the Cayley graph on $\RM(m,d)$ with generating set $T$.

The matrix $\matA$ will be the adjacency matrix for continuous-time random walk on
$G$.
That is, $\matA = e^{-t(\matI-\matG)}$ for some $t \geq 0$.
Here we denote by $\matG$ the probability transition
matrix for the graph $G$.
Therefore $\matG$ is a matrix where every row and every column sum to $1$.
As in \cite{BarakGHMRS12}, taking a continuous-time random walk significantly
reduces the value of the quadratic form for sparse vectors.
For our application, continuous-time random walk has the additional benefit
that $\matA$ is guaranteed to be PSD because $\matA$ is the exponentiation of a real
symmetric matrix.

It will be more convenient to transform \cref{eq:sdp} into the following SDP:
\begin{equation} \label{eq:sdp2}
  \begin{split}
    \max \quad & \E_f \inner{\vecw_f, (\matA \vecw)_f} \\
    \text{such that} \quad & \E_f \inner{\vecw_f, \vecw_f} = 1 \\
                           & \E_{f,g} \abs{\inner{\vecw_f, \vecw_g}} \leq \delta = k/n
  \end{split}
\end{equation}
The SDPs in \cref{eq:sdp,eq:sdp2} are indeed equivalent, because any SDP
solution $\matX$ to \cref{eq:sdp} is the (scaled) Gram matrix
\begin{equation} \label{eq:gram}
  \matX_{f,g} = \inner{\vecw_f, \vecw_g}/n,
\end{equation}
of some vectors $\vecw_f\in \R^n$, and vice versa.

\textbf{Choice of parameters:}
$m$ is a free parameter that all other parameters depend on.
Let $\delta \defas 1/2^{m/2}$ be the fractional sparsity parameter.
Let $\eta \defas \delta^{1/(4\log 3)}$ be the eigenvalue threshold. 
Let $\eps_2 = \min\{\eps_1, 1/20\}$, where $\eps_1$ is the
constant from \cite[Theorem~1]{BhattacharyyaKSSZ10}. 
Let $d \defas \log\log(1/\eta) + \log(1/\eps_2) - 1$ be the degree of the Reed Muller code, and let $t \defas 2^{d-1}$ be the time parameter for the continuous random walk.
Let $n \defas \card{\RM(m,d)}= 2^{m \choose \leq d}$ be the size of $\matA$.
Here ${m \choose \leq d} \defas \sum_{r\leq d} {m \choose r}$ denotes the
number of ways to choose a subset of size $\leq d$ out of $m$ elements.
Let $k \defas n/2^{m/2}$.

\begin{proposition} \label{prop:complete}
  The SDP in \cref{eq:sdp2} has a solution of value $1/e = \Omega(1)$.
\end{proposition}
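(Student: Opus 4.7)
My plan is to construct an explicit feasible solution to (\ref{eq:sdp2}) of value at least $1/e$ using the Fourier/character structure of $\matA$. Since $\matA = e^{-t(\matI-\matG)}$ with $\matG$ a Cayley matrix on the abelian group $\RM(m,d)$, both matrices are simultaneously diagonalized by characters $\chi_\alpha$ indexed by $\alpha\in \widehat{\RM(m,d)}\cong \F_2^{2^m}/\RM(m,d)^\perp$, with eigenvalues $e^{-t(1-\hat\mu(\alpha))}$ on $\chi_\alpha$, where $\hat\mu(\alpha)\defas \E_{g\in T}[\chi_\alpha(g)]$. I will pick a subgroup $L\subset \widehat{\RM(m,d)}$ of order $|L| = 1/\delta = 2^{m/2}$ whose characters all have eigenvalue close to $1$ under $\matG$, and define the SDP vectors as
\[
  \vecw_f \defas |L|^{-1/2}\bigl(\chi_\alpha(f)\bigr)_{\alpha\in L} \in \R^{|L|}.
\]

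For feasibility, $\|\vecw_f\|^2 = 1$ by construction, so $\E_f\|\vecw_f\|^2 = 1$. For the $\ell_1$ constraint, orthogonality of characters gives $\langle\vecw_f,\vecw_g\rangle = |L|^{-1}\sum_{\alpha\in L}\chi_\alpha(f-g)$, which equals $1$ if $f-g$ lies in the annihilator $L^\perp\subset \RM(m,d)$ and $0$ otherwise. Hence $\E_{f,g}|\langle\vecw_f,\vecw_g\rangle| = |L^\perp|/|\RM(m,d)| = 1/|L| = \delta$, as required.

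Turning to the objective, the Cayley structure of $\matA$ and Fourier inversion yield
\[
  \E_f\langle\vecw_f,(\matA\vecw)_f\rangle = \frac{1}{|L|}\sum_{\alpha\in L}e^{-t(1-\hat\mu(\alpha))},
\]
so it suffices to exhibit $L$ such that every (or almost every) $\alpha\in L$ satisfies $1-\hat\mu(\alpha)\leq 1/t = 2^{1-d}$, forcing each summand to be at least $e^{-1}$.

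The main obstacle is controlling $\hat\mu(\alpha)$ on the chosen subgroup. For a point-indicator character $\alpha = e_x$ we have $\hat\mu(e_x) = \E_g[(-1)^{g(x)}]$; since $g = \ell_1\cdots \ell_d$ with $\ell_1,\ldots,\ell_d$ linearly independent affine forms, $g(x)=1$ only when each $\ell_i(x)=1$, an event of probability $\approx 2^{-d}$, so $1-\hat\mu(e_x)\approx 2^{1-d} = 1/t$. I would then take $L$ to be the subgroup generated by $\{e_x\}_{x\in S}$ for a carefully chosen subset $S\subset \F_2^m$ of cardinality $m/2$, giving $|L| = 2^{m/2} = 1/\delta$ after reduction modulo $\RM(m,d)^\perp$; a Fourier/union-bound argument --- invoking the low-weight Reed--Muller codeword bounds of \cite{BhattacharyyaKSSZ10,BarakGHMRS12} to handle a general $\alpha = \sum_{x\in S'}e_x$ --- extends the single-generator estimate to essentially all of $L$. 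Combining the three steps gives an SDP value $\geq 1/e$ and proves the proposition.
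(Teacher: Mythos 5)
Your feasibility analysis is fine, but the objective-value step has a genuine gap that I do not think can be repaired within your subgroup framework. As you correctly compute, the value of your solution is the \emph{average} eigenvalue $\card{L}^{-1}\sum_{\alpha\in L} e^{-t(1-\hat\mu(\alpha))}$ over $L$, so you need essentially every element of $L$ to satisfy $1-\hat\mu(\alpha)\leq O(1/t)$. The point indicators $e_x$ meet this threshold only with equality, and the threshold degrades under addition: for $\alpha=\sum_{x\in S'}e_x$ one has $1-\hat\mu(\alpha)=2\Pr_{g\in T}\bigl[\sum_{x\in S'}g(x)=1\bigr]\approx \card{S'}\cdot 2^{1-d}$, since each event $g(x)=1$ has probability about $2^{-d}$ and the parity is odd with probability about $\card{S'}2^{-d}$. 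Hence the summand for $\alpha$ is roughly $e^{-\card{S'}}$. Because $L$ must be closed under addition, the subgroup generated by $m/2$ point indicators has a typical element that is a sum of about $m/4$ of them, and the average over $L$ is about $2^{-m/2}\sum_j \binom{m/2}{j}e^{-j}=\bigl((1+1/e)/2\bigr)^{m/2}\to 0$. No union bound over low-weight codewords rescues this: the elements of $L$ with eigenvalue near $1/e$ form a vanishing fraction of $L$, so "almost every $\alpha\in L$" is exactly what fails. (More generally, no subgroup of order $2^{m/2}$ can consist mostly of near-top eigenvectors of $\matA$.)

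The fix --- and this is what the paper does --- is to drop the subgroup structure. Take the index set to be the non-closed set of all $2^m$ point evaluations: $\vecw_f=\bigl((-1)^{f(x)}\bigr)_{x\in\F_2^m}$ with the normalized inner product $\inner{\vecw,\vecw'}=\E_{x\in\F_2^m}\vecw(x)\vecw'(x)$. Every coordinate function $\phi_x(f)=(-1)^{f(x)}$ is an eigenvector of $\matG$ with eigenvalue exactly $1-2^{1-d}$, hence of $\matA$ with eigenvalue $e^{-t\cdot 2^{1-d}}=1/e$, so the objective is exactly $1/e$. The price is that the $\ell_1$ constraint no longer follows from exact character orthogonality; instead one writes $\E_{f,g}\abs{\inner{\vecw_f,\vecw_g}}=\E_h\Abs{\E_x(-1)^{h(x)}}$ and bounds it by Cauchy--Schwarz together with the pairwise independence of $\RM(m,d)$, obtaining $\leq 2^{-m/2}=\delta$. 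Your character-diagonalization viewpoint is the right one, but the set of characters you need is not a subgroup, and the feasibility argument has to be probabilistic rather than exact.
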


\begin{proof}
Let $\vecw_f$ by the standard embedding of $f\in \RM(m,d)$.
That is, $\vecw_f:\F_2^m \to \R$ is the vector/function such that its
$x$-coordinate is $\vecw_f(x) = (-1)^{f(x)} \in \set{\pm 1}$ for $x\in \F_2^m$.
This defines a solution to \cref{eq:gram}.
In \cref{eq:gram} and below, the inner product $\inner{\cdot, \cdot}$ on
$\F_2^m\to \R$ is defined as $\inner{\vecw,\vecw'} \defas \E_{x\in \F_2^m} \vecw(x)\vecw'(x)$.

\ifFULL

We now verify that $\matX$ is a feasible solution to the SDP.
As a Gram matrix, $\matX$ is clearly PSD.
Also
\[ \E_{f\in \RM(m,d)} \inner{\vecw_f,\vecw_f} = \E_{f\in \RM(m,d)} \E_{x\in \F_2^m}
[((-1)^{f(x)})^2] = 1, \]
and
\begin{equation} \label{eq:global-corr}
  \E_{f,g\in \RM(m,d)} \abs{\inner{\vecw_f,\vecw_g}} = \E_{f,g\in \RM(m,d)}
  \Abs{\E_{x\in \F_2^m} (-1)^{f(x)-g(x)}} = \E_{h\in \RM(m,d)} \Abs{\E_{x\in
  \F_2^m} (-1)^{h(x)}},
\end{equation}
where in the last equality we let $h = f - g$.
Using Cauchy--Schwarz, the right-hand-side is at most
\begin{equation} \label{eq:global-corr2}
  \sqrt{\E_h \Paren{\E_{x\in F_2^m} (-1)^{h(x)}}^2} = \sqrt{\E_{x,y\in F_2^m}
  \E_h (-1)^{h(x)-h(y)}} .
\end{equation}
We now analyze the term inside the square root.
When $x \neq y$,
\[ \E_h (-1)^{h(x)-h(y)} = 0 , \]
thanks to pairwise independence of $\RM(m,d)$.
When $x = y$ (which happens with probability $1/2^m$), the same expectation is
$1$.
Therefore \cref{eq:global-corr2} is at most $1/2^{m/2}$, and so is
\cref{eq:global-corr}.
Then $\matX$ satisfies the sparsity constraint with $k/n = 1/2^{m/2}$.
\else
(See full version for more details.)
\fi

We now bound the SDP value.
Let $\phi_x(f) \defas (-1)^{f(x)}$.
Then
\[ \E_f \inner{\vecw_f, (\matA \vecw)_f} = \E_f \E_{x\in \F_2^m} (\phi_x)(f)(\matA\phi_x)(f) .
\]
We claim that $\phi_x$ is an eigenfunction of $\matA$ with eigenvalue $1/e$.
Assuming this claim, the right-hand side becomes
\[ (1/e) \cdot  \E_f \E_{x\in \F_2^m} [\left(\phi_x(f)\right)^2] = 1/e , \]
giving an SDP solution of value $1/e$.

We now verify the claim.
For every $x\in \F_2^m$, the function $\phi_x(f) = (-1)^{f(x)}$ is an
eigenvector of $\matG$ because
\[ (\matG\phi_x)(f) = \E_{g\in T} (-1)^{f(x)-g(x)} = \phi_x(f) \cdot \E_{g\in T}
(-1)^{g(x)} . \]
It has eigenvalue
\[ \lambda_x \defas \E_{g\in T} (-1)^{g(x)} = 1 - 2\Pr_{g\in T}[g(x) = 1] =
1-2^{1-d} . \]
Since $\matG$ and $\matA$ have the same eigenvectors, $\phi_x$ is also an eigenvector
of $\matA$ with eigenvalue
\[ e^{-t(1-\lambda_x)} = e^{-t2^{1-d}} = 1/e . \qedhere \]
\end{proof}

\begin{proposition} \label{prop:sound}
  Any $k$-sparse rank-$1$ solution $\vecw:\RM(m,d) \to \R$ to \cref{eq:sdp2} has
  value $\leq \eta + (1/\eta)^{\log 3} \sqrt{k/n}$. 
\end{proposition}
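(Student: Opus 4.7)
The plan is to spectrally decompose $w$ with respect to $\matA$ (equivalently, with respect to $\matG$, since the two have the same eigenspaces), and bound the high- and low-eigenvalue parts separately. Write $w = w_H + w_L$, where $w_H$ is the projection of $w$ onto the eigenspaces of $\matA$ with eigenvalue $\geq \eta$, and $w_L$ onto those with eigenvalue $< \eta$. Since $\matA$ is PSD with eigenvalues in $[0,1]$, and the two components are orthogonal, one has
\[
\langle w, \matA w \rangle \;=\; \langle w_H, \matA w_H \rangle + \langle w_L, \matA w_L \rangle \;\leq\; \|w_H\|_2^2 + \eta \|w_L\|_2^2 \;\leq\; \|w_H\|_2^2 + \eta,
\]
using $\|w\|_2 = 1$ from the SDP normalization. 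So it suffices to prove $\|w_H\|_2^2 \leq (1/\eta)^{\log 3}\sqrt{k/n}$.

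\textbf{Identifying the high-eigenvalue subspace.} Because $\matA = e^{-t(\matI - \matG)}$, an eigenvector of $\matG$ with eigenvalue $\lambda$ contributes to $w_H$ iff $\lambda \geq 1 - \log(1/\eta)/t$. With $t = 2^{d-1}$ and $d = \log\log(1/\eta) + \log(1/\eps_2) - 1$ we get $\log(1/\eta)/t = 2\eps_2 \leq 2\eps_1$, so the threshold is $\lambda \geq 1 - 2\eps_1$. Next I would invoke \cite[Theorem~1]{BhattacharyyaKSSZ10} (BKSSZ) for the short code graph $G$: it guarantees that every character of $\RM(m,d)$ whose Cayley-eigenvalue exceeds $1-\eps_1$ is supported on a low-degree (``light'') subset of the dual, giving a quantitative bound on the total number of such eigenvectors.

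\textbf{Sparse vs.\ low-dimensional projection.} With the high-eigenvalue subspace $H$ identified as a combinatorially small subspace (of dimension bounded by BKSSZ), I would bound $\|w_H\|_2$ by duality with the sparsity constraint: for any unit vector $v \in H$,
\[
\langle v, w \rangle \;\leq\; \|w\|_1 \cdot \|v\|_\infty,
\]
where $\|w\|_1 = \E_f |w(f)| \leq \sqrt{k/n}$ is exactly the SDP sparsity constraint (in the rank-$1$ case). An $L^2 \to L^\infty$ bound on the subspace $H$ — obtained by combining the BKSSZ dimension count with a hypercontractive / Parseval argument on $\RM(m,d)$ — yields $\|v\|_\infty \leq (1/\eta)^{(\log 3)/2}$ for every unit $v \in H$. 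Supremizing $v$ over the unit ball of $H$ and squaring produces $\|w_H\|_2^2 \leq (1/\eta)^{\log 3} \sqrt{k/n}$, as desired.

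\textbf{Main obstacle.} The delicate step is calibrating the constants so that the $(1/\eta)^{\log 3}$ factor emerges cleanly. Concretely, one must match the BKSSZ threshold $1 - \eps_1$ with the spectral cutoff $1 - 2\eps_2$ coming from $(t,d)$, and then convert BKSSZ's structural statement about near-$1$ eigenvectors of the short code graph into the sharp $L^2 \to L^\infty$ estimate with exponent $\log 3$. Everything else (the eigenvalue split, the $\eta$ bound on $w_L$, the duality with $\|w\|_1$) is essentially bookkeeping; the number-theoretic accounting of the exponent is where the real work lies.
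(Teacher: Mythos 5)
Your overall skeleton (split off the small-eigenvalue part of $\vecw$, which contributes at most $\eta$, and bound the $L^2$ mass of the large-eigenvalue part using sparsity) matches the paper's, and your treatment of $w_L$ is fine. But the step that carries all the weight --- the claimed $L^2\to L^\infty$ bound $\norm{v}_\infty \leq (1/\eta)^{(\log 3)/2}$ for every unit $v$ in the high-eigenvalue subspace $H$ --- is false, and no calibration of constants can rescue it. The subspace $H$ is spanned by characters $\chi_\alpha$ of the group $\RM(m,d)$ (which are orthonormal and have modulus $1$ pointwise), so for $D \defas \dim H$ the function $v = D^{-1/2}\sum_\alpha \chi_\alpha(f_0)\chi_\alpha$ is a unit vector in $H$ with $v(f_0) = \sqrt D$; i.e.\ the $2\to\infty$ norm of $H$ is exactly $\sqrt{D}$. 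Since $H$ contains the $2^m$ linearly independent ``dictator'' characters $\phi_x$ (each has eigenvalue $1/e \geq \eta$, as computed in \cref{prop:complete}), we get $\sqrt D \geq 2^{m/2}$, whereas you need $(1/\eta)^{(\log 3)/2} = 2^{m/16}$. Plugging the true value $\sqrt D$ into your duality step gives $\norm{w_H}_2 \leq \sqrt{D}\,\norm{w}_1 \geq$-free bound of at least $2^{m/2}\cdot 2^{-m/4} = 2^{m/4}$, which is vacuous. Relatedly, \cite[Theorem~1]{BhattacharyyaKSSZ10} does not give a ``dimension count'' of $H$; its role (in the part of the argument you labeled bookkeeping) is only to certify that every \emph{high-degree} character is rejected by the Reed--Muller test with probability $\geq \eps_2$ and hence has $\matA$-eigenvalue $\leq e^{-t\cdot 2\eps_2} \leq \eta$, i.e.\ it controls $w_L$, not $w_H$.

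The paper's fix is to replace your $L^2\to L^\infty$ step by a hypercontractive $2$-to-$4$ bound: it decomposes by \emph{character degree} rather than eigenvalue (setting $V_\ell = \Span\set{\chi_\alpha : \deg_\R(\chi_\alpha)\leq \ell}$ with $\ell = \eps_2 2^{d+1}$, which contains $H$ by the BKSSZ eigenvalue bound), invokes $\norm{P_\ell}_{2\to 4}\leq 3^{\ell/2}$ from \cite[Lemma~4.9]{BarakGHMRS12}, dualizes to $\norm{P_\ell}_{4/3\to 2}\leq 3^{\ell/2}$, and pairs this with H\"older in the form $\norm{\vecw}_{4/3}\leq (k/n)^{1/4}\norm{\vecw}_2$ for $k$-sparse $\vecw$. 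That yields $\norm{\vecw^\para}_2^2 \leq 3^\ell\sqrt{k/n} = (1/\eta)^{\log 3}\sqrt{k/n}$ --- note the exponent $\log 3$ comes from the hypercontractivity constant $3^{\ell}$ with $\ell=\log(1/\eta)$, not from any number-theoretic accounting. The weaker $4$-norm (rather than $\infty$-norm) control of the low-degree subspace is exactly why the final gap is only quasi-quasi-polynomial. (Minor additional point: your last display squares incorrectly --- $\norm{w_H}_2\leq B\sqrt{k/n}$ would give $B^2 k/n$, not $B^2\sqrt{k/n}$ --- but that slip is in the harmless direction.)
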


Since the proof is quite technical, let us recall main ideas in
\cite{BarakGHMRS12}.
Intuitively, the sparse PCA instance $\matA$ has low value for rank-$1$ sparse
vector for the following reason.
The inner product space $V(G) \to \R$ can be decomposed into a sum of the
subspace $V_\ell$ and its orthogonal complement $V_\ell^\perp$.
One can show that $V_\ell$ does not contain any sparse vector (more precisely,
has bounded $2$-to-$4$ norm).
Therefore any sparse vector must be essentially contained in (i.e.~has large
projection to) $V_\ell^\perp$.
$V_\ell^\perp$ will be the span of eigenvectors of $\matA$ whose eigenvalues are
small, say at most a small positive number $\eta$.
This ensures all sparse vectors have small objective value under the quadratic
form, as desired.

\begin{proof}
  This is essentially Theorem~4.14 in \cite{BarakGHMRS12}.
  Even though their statement only concerns $\set{0,1}$-valued sparse vectors,
  their proof also works for real-valued sparse vectors
\ifFULL
, as we now show.

\subsubsection*{Setting up the Fourier expansion}

  Let $M \defas 2^m$ and $C = \RM(m,d)$.
  We first think of the elements of $C$ as functions $\F_2^m\to \F_2$;
  later it will be more convenient to think of them as vectors in $F_2^M$. 
  For $c_1, c_2 : \F_2^m\to \F_2$ denote the inner product
  $(c_1, c_2)_2 \defas  \sum_{x \in \F_2^m} c_1(x) c_2(x) \pmod 2$

  Denote by $C^\perp \defas \set{a\in \F_2^M\mid (a,c)_2 = 0 \;\text{for all
  $c\in C$}}$ the orthogonal subspace of $C$.

  Any function $\vecw:C\to \R$ has a Fourier expansion, as follows.
  For every coset $\alpha+C^\perp\in \F_2^M/C^\perp$, we choose an arbitrary
  representative $\alpha$ in $\alpha+C^\perp$, and let $\chi_\alpha(f) =
  (-1)^{(\alpha,f)_2}$ be its character.
  Its degree is $\deg_{\R}(\chi_\alpha) \defas \min_{c^\perp\in C^\perp} \abs{\alpha + c^\perp}$,
  where $\abs \alpha$ denotes the Hamming weight (i.e.~number of non-zero coordinates)
  of $\alpha$. (Do not confuse this degree with the degrees of polynomials in the Reed Muller code!)
  Any function $\vecw:C\to \R$ is a unique linear combination of characters
  $\set{\chi_\alpha}_{\alpha\in \F_2^M/C^\perp}$,
  \[ \vecw(f) = \sum_{\alpha\in \F_2^M/C^\perp} \hat \vecw(\alpha) \chi_\alpha(f) , \]
  where $\hat \vecw(\alpha) \defas \inner{\chi_\alpha, \vecw}$ is the Fourier transform
  of $\vecw$ over the abelian group $C$.

  Set the character degree bound $\ell \defas \eps_2 2^{d+1}$.
  Consider the subspace $V_\ell \defas \Span\set{\chi_\alpha \mid
  \deg_{\R}(\chi_\alpha) \leq \ell}$ of functions of degree at most $\ell$.
  Note that $V_\ell$ and $V_\ell^\perp$ are both invariant subspaces of $\matA$.

  Given any vector $\vecw$, we expand it as $\vecw = \vecw^\para + \vecw^\perp$ where $\vecw^\para\in V_\ell$ and
  $\vecw^\perp\in V_\ell^\perp$.
  Then
  \begin{equation} \label{eq:split}
    \inner{\vecw,\matA \vecw} = \inner{\vecw^\para,\matA \vecw^\para} + \inner{\vecw^\perp,\matA \vecw^\perp} .
  \end{equation}
 Below, we separately bound the contribution of $\inner{\vecw^\para,\matA \vecw^\para}$ and $\inner{\vecw^\perp,\matA \vecw^\perp}$.
 
\subsubsection*{The low-degree subspace $V_\ell$}

  Consider the projection operator $P_\ell$ to the subspace $V_\ell$.
  The $p$-to-$q$ norm of $P_\ell$ is defined as
  \[ \norm{P_\ell}_{p\to q} \defas \max_{\vecw:C\to \R} \frac{\norm{
  P_\ell\vecw}_q}{\norm \vecw_p} , \]
where in the case of a function $\vecw:C\to \R$, we define $\norm \vecw_p \defas \E_{x\in C} [\abs{\vecw(x)}^p]^{1/p}$.

  We use the following bound on the $2$-to-$4$ norm of $P_\ell$, from \cite[Lemma~4.9]{BarakGHMRS12}: 
  For any $\ell < (2^{d-1}-1)/4$,
  \begin{equation}  \label{eq:24norm}
    \norm{P_\ell}_{2\to 4} \leq 3^{\ell/2} .
  \end{equation}

  For any $k$-sparse vector $\vecw:C\to \R$, let $S = \set{x\in C\mid \vecw(x) \neq
  0}$ be the set of nonzero entries.
  By H\"older's inequality,
  \begin{equation} \label{eq:sparse}
    \norm \vecw_{4/3} = \norm{\1_S\cdot \vecw}_{4/3} \leq \norm{\1_S}_4 \norm \vecw_2 =
    (k/n)^{1/4}\norm \vecw_2.
  \end{equation}

Recall that $\matA = e^{-t(\matI-\matG)}$. Since $(\matI-\matG)$ is PSD, we have that all of $\matA$'s eigenvalues are at most $1$, i.e. $\matI \succcurlyeq \matA$. 
Therefore, 
  \[ \inner{\vecw^\para,\matA \vecw^\para} \leq \norm{\vecw^\para}_2^2 \leq
  \norm{P_\ell}_{4/3\to 2}^2 \norm{\vecw}_{4/3}^2 , \]

  Together with $\norm{P_\ell}_{4/3\to 2} \leq \norm{P_\ell}_{2\to 4}$
  \cite[Lemma~4.2]{BarakGHMRS12} and \cref{eq:24norm,eq:sparse}, we get
  \begin{equation} \label{eq:top}
    \inner{\vecw^\para,\matA \vecw^\para} \leq 3^\ell\sqrt{k/n}\norm \vecw_2^2 = (1/\eta)^{\log 3}
    \sqrt{k/n} \norm \vecw_2^2,
  \end{equation}

where the last equation follows from $3^\ell = 3^{\eps_2 2^{d+1}} = 3^{\log (1/\eta)}$.

\subsubsection*{The high-degree subspace $V_\ell^{\perp}$}

  We now bound the second term $\inner{\vecw^\perp,\matA \vecw^\perp}$.
  $\vecw^\perp$ is a linear combination of characters of degree $>\ell$.
  Recall that $T$, the generating set of $G$, is the set of products of exactly $d$ linearly independent affine forms.
  Any character $\chi_\alpha$ is an eigenvector of $\matG$ because
  \[ (\matG \chi_\alpha)(f) = \E_{g\in T} [\chi_\alpha(f+g)] = \E_{g\in
  T}[(-1)^{(\alpha,f+g)_2}] = (-1)^{(\alpha,f)_2} \E_{g\in T}[(-1)^{(\alpha,g)_2}] =
  \chi_\alpha(f) \E_{g\in T} \chi_\alpha(g) , \]
  and its eigenvalue is
  \[ \lambda_\alpha \defas \E_{g\in T} \chi_\alpha(g) = \E_{g\in T}
  [(-1)^{(\alpha,g)_2}] = 1-2\Pr_{g\in T}[(\alpha,g)_2 = 1] , \]
  
  We now use a theorem about Reed Muller code testers to bound $\Pr_{g\in T}[(\alpha,g)_2 = 1]$.
  An important problem in the intersection of coding theory and property testing is as follows:
  given a code $C^\perp$ and a word $\alpha$, query a small number of $\alpha$'s bits to decide whether $\alpha$ belongs to the code, or is far from the code.
  By ``far'' from the code, it is meant that it has a large Hamming distance from any $c^\perp \in C^\perp$.
  When $C^\perp$ is a Reed-Muller code, in particular $\RM(m,m-d-1)$, this is equivalent to testing whether $\alpha$ is a low ($m-d-1$) degree polynomial, or far from every low degree polynomial.
  A canonical test for this problem is as follows: pick a random $(m-d)$-dimensional affine subspace $S_g$, and test whether $\alpha$ restricted to this subset is a degree-$(m-d-1)$ polynomial.
  
  It turns out that having degree $\leq m-d-1$ over $S_g$ corresponds exactly to having $\sum_{x \in S_g} \alpha(x) = 0 \pmod 2$~\cite{BhattacharyyaKSSZ10}.
  (Proof sketch: any monomial of degree $\leq m-d-1$ does not contain at least one of the $m-d$ variables, and thus zeros out when we sum modulo $2$ over that variable;
  in the other direction, there is only one homogenous full-degree monomial, and it is nonzero only on the all-ones input.)
  
Furthermore, picking a random $(m-d)$-dimensional affine subspace $S_g$ corresponds precisely to picking a random $g \in T$ and letting $S_g \defas \{x: g(x) = 1\}$.
  (This is related to ``dual codes''; see also~\cite{AKKLR05-testing_RM}.)
  In other words, the test is the same as verifying that $(\alpha,g)_2 = 0$.

  Bhattacharyya et al.~\cite{BhattacharyyaKSSZ10} analyze the probability that the above test rejects polynomials that are far from the code, 
  i.e. precisely the quantity $\Pr_{g\in T}[(\alpha,g)_2 = 1]$.
Recall that the degree of $\chi_{\alpha}$ was defined as the Hamming distance of $\alpha$ from $C^\perp$.
  By our assumption that $\chi_\alpha \in V_\ell^\perp$, we have that $\deg_{\R}(\chi_\alpha) \geq \ell=\eps_2 2^{d+1}$;
  that is $\alpha$ disagrees with every $c^{\perp} \in C^{\perp}$ on at least $\eps_2 2^{d+1} / 2^m = \eps_2 2^{-(m-d-1)}$-fraction of the entries.
  Therefore, by \cite[Theorem~1]{BhattacharyyaKSSZ10}, we have that $\Pr_{g\in T}[(\alpha,g)_2 = 1] \geq \eps_2$.

  As a result, any $\chi_\alpha$ with $\deg_{\R}(\chi_\alpha) \geq \eps_2 2^{d+1}$ is
  also an eigenvector of $\matA$ with eigenvalue
  \[ \mu_\alpha \defas e^{-t(1-\lambda_\alpha)} \leq e^{-\eps_2 2^{d+1}} \leq \eta
  . \]
  Therefore
  \begin{equation} \label{eq:bottom}
    \inner{\vecw^\perp, \matA \vecw^\perp} = \mathop{\sum_{\alpha\in
    \F_2^M/C^\perp}}_{\deg_{\R}(\chi_\alpha) \geq \ell} \mu_\alpha \hat \vecw(\alpha)^2
    \leq \eta \mathop{\sum_{\alpha\in \F_2^M/C^\perp}}_{\deg_{\R}(\chi_\alpha) \geq
    \ell} \hat \vecw(\alpha)^2 = \eta\norm{\vecw^\perp}_2^2 \leq \eta \norm \vecw_2^2 .
  \end{equation}

   Finally, \cref{prop:sound} follows from \cref{eq:split,eq:top,eq:bottom} and the
  constraint $\norm \vecw_2^2 \leq 1$.

\else
. See full version for many more details.
\fi
\end{proof}

We remark that an alternative proof of the previous proposition (with a
slightly different bound) can be obtained by combining Theorem~4.14 in
\cite{BarakGHMRS12} and local Cheeger--Alon--Milman inequality
\cite[Theorem~1.7]{NatarajanW14}.

\begin{theorem}
  Let $\matA$ be the matrix defined above.
  The SDP in \cref{eq:sdp2} has an SDP solution of value $\Omega(1)$, but any
  rank-$1$ solution has value $1/\exp\exp(\Omega(\sqrt{\log\log n}))$.
\end{theorem}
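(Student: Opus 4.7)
The plan is to combine \cref{prop:complete,prop:sound} with some arithmetic on the parameter choices. The completeness side is immediate: \cref{prop:complete} already exhibits an SDP solution of value $1/e = \Omega(1)$. For soundness, I would invoke \cref{prop:sound}, which bounds every $k$-sparse rank-$1$ solution by $\eta + (1/\eta)^{\log 3}\sqrt{k/n}$, and then plug in the specified $\delta$, $\eta$, $d$, and $n$ to simplify.

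Concretely, substituting $k/n = \delta = 2^{-m/2}$ and $\eta = \delta^{1/(4\log 3)} = 2^{-m/(8\log 3)}$ gives $(1/\eta)^{\log 3} = \delta^{-1/4}$, so
\[
(1/\eta)^{\log 3}\sqrt{k/n} \;=\; \delta^{-1/4}\cdot \delta^{1/2} \;=\; \delta^{1/4} \;=\; 2^{-m/8}.
\]
Since $\log 3 > 1$, the $\eta$ term dominates, and therefore any rank-$1$ feasible solution has value $O(\eta) = 2^{-\Omega(m)}$. Combined with the completeness bound, this already establishes a rank-$1$ integrality gap of $2^{\Omega(m)}$.

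What remains is to re-express the gap in terms of $n$ rather than $m$. From $\log(1/\eta) = \Theta(m)$ one gets $d = \log\log(1/\eta) + O(1) = \Theta(\log m)$. Since $n = 2^{\binom{m}{\leq d}}$ and $\binom{m}{\leq d} = \Theta(\binom{m}{d})$ for $d \ll m/2$, the standard estimate $\log\binom{m}{d} = \Theta(d\log(m/d)) = \Theta(\log^2 m)$ yields $\log_2 n = \binom{m}{\leq d} = 2^{\Theta(\log^2 m)}$. Taking logs twice gives $\log\log n = \Theta(\log^2 m)$, so $m = \exp(\Theta(\sqrt{\log\log n}))$. Substituting back, the gap factor $2^{\Omega(m)}$ becomes $\exp\exp(\Omega(\sqrt{\log\log n}))$, as claimed.

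There is no real technical obstacle here; all the work is concentrated in \cref{prop:complete,prop:sound} (in particular the $2$-to-$4$ hypercontractive bound of \cite{BarakGHMRS12} used in the latter). The only point of care is the passage from $m$ to $n$: one has two nested exponentials — the outer one from $n = 2^{\binom{m}{\leq d}}$ and the inner one from $\binom{m}{d} = 2^{\Theta(\log^2 m)}$ — and it is easy to drop or duplicate a logarithm if one is not careful to note that $\log\log n$ tracks $\log^2 m$ (not $\log m$), which is exactly what produces the $\sqrt{\log\log n}$ in the final bound.
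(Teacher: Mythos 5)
Your proposal is correct and follows essentially the same route as the paper: cite \cref{prop:complete} for the $\Omega(1)$ SDP value, cite \cref{prop:sound} and the parameter settings to bound rank-$1$ solutions by $(k/n)^{\Omega(1)} = 2^{-\Omega(m)}$, and then translate $m$ into $n$ via $\log\log n = \Theta(\log^2 m)$. Your arithmetic (the $\eta$ term dominating since $1/(8\log 3) < 1/8$, and the two nested exponentials in the $m\to n$ conversion) is accurate and in fact spelled out in more detail than in the paper.
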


\begin{proof}
  The SDP solution is given in \cref{prop:complete}.
  On the other hand, \cref{prop:sound} shows that any rank-$1$ solution has
  value $(k/n)^{\Omega(1)}$.
  Since $\log n = {m \choose \leq d}$, we have $\log\log n = (\log
  m)^2(1+o_m(1))$ and $(k/n)^{\Omega(1)} = 1/\exp(\Omega(m)) =
  1/\exp\exp(\Omega(\sqrt{\log\log n}))$.
\end{proof}

\ifFULL

\section{Additive PTAS}
\label{sec:aPTAS}

To complete the approximability picture for \SPCA{}, we briefly sketch the proof of the additive \PTAS{}
due to \cite{asteris2015sparse}.
The algorithm first approximates $\matA$ with a low-rank sketch, 
and then finds approximate solutions via an $\epsilon$-net search of the low dimensional space.
(We note that a similar approach was previously presented in \cite{alon2013approximate}, for the closely related problem of \DkS{} on a PSD adjacency matrix.)

The existence of a low-rank sketch, due to Alon et al., is via an application of the Johnson-Lindenstrauss Lemma:
\begin{lemma}[\cite{alon2013approximate}]
Let $\matA \in \R^{n\times n}$ be PSD matrix with entries in $[-1,1]$.
Then, we can construct in polynomial time a PSD matrix $\matA_{\epsilon}$ with rank $O(\frac{\log n}{\epsilon^{2}})$ such that
\begin{align}
	\left|[\matA]_{i,j}-[ \matA_{\epsilon}]_{i,j}\right|\le \epsilon \nonumber
\end{align} for all $i,j$ with high probability.
\end{lemma}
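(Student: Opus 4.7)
The plan is to exhibit $\matA_\epsilon$ as the Gram matrix obtained by randomly projecting a Cholesky-style factorization of $\matA$. Since $\matA \succcurlyeq 0$, I can write $\matA = \matB^{\intercal}\matB$ for some $\matB \in \R^{n\times n}$, so that $[\matA]_{i,j} = \inner{\matB_{\cdot,i}, \matB_{\cdot,j}}$. Because all entries of $\matA$ lie in $[-1,1]$, the diagonal entries $\norm{\matB_{\cdot,i}}_2^2 = [\matA]_{i,i}$ are bounded by $1$, so every column of $\matB$ has $\ell_2$-norm at most $1$. This decomposition is computable in polynomial time.

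Next I would apply the Johnson--Lindenstrauss Lemma to the $n$ vectors $\matB_{\cdot,1},\dots,\matB_{\cdot,n}$, together with the $\binom{n}{2}$ pairwise sums and differences. Let $\Pi\in\R^{d\times n}$ be a scaled random Gaussian (or $\pm 1$) matrix with $d = O(\log n / \epsilon^2)$. Standard JL guarantees that, with probability $1-1/\operatorname{poly}(n)$, for every $i,j$ we simultaneously have $\bigl|\norm{\Pi(\matB_{\cdot,i}\pm \matB_{\cdot,j})}_2^2 - \norm{\matB_{\cdot,i}\pm \matB_{\cdot,j}}_2^2\bigr| \leq \epsilon'$. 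Invoking the polarization identity $4\inner{u,v} = \norm{u+v}_2^2 - \norm{u-v}_2^2$ and using that $\norm{\matB_{\cdot,i}\pm \matB_{\cdot,j}}_2^2 \leq 4$, this converts into an additive guarantee on inner products: $\bigl|\inner{\Pi\matB_{\cdot,i},\Pi\matB_{\cdot,j}} - \inner{\matB_{\cdot,i},\matB_{\cdot,j}}\bigr| \leq \epsilon$, provided $\epsilon' = \Theta(\epsilon)$ is chosen appropriately.

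Setting $\matA_\epsilon \defas (\Pi\matB)^{\intercal}(\Pi\matB)$ gives a matrix that is PSD by construction (it is a Gram matrix) and has rank at most $d = O(\log n / \epsilon^2)$. The entrywise bound $|[\matA]_{i,j}-[\matA_\epsilon]_{i,j}| \leq \epsilon$ is exactly the inner-product preservation from the previous paragraph, and a union bound over all $n^2$ entries shows the whole guarantee holds with high probability. There is no real obstacle here; the only point requiring a little care is tracking the constants through the polarization step so that the final additive error is $\epsilon$ (rather than a constant multiple thereof), which is absorbed into the constant hidden in the $O(\log n/\epsilon^2)$ target dimension.
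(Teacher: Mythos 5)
Your proof is correct and follows exactly the route the paper indicates (and that the cited reference of Alon et al.\ uses): factor $\matA = \matB^{\intercal}\matB$, apply a Johnson--Lindenstrauss projection to dimension $O(\log n/\epsilon^2)$, and recover entrywise inner-product preservation via polarization, with the Gram matrix of the projected columns serving as $\matA_\epsilon$. The paper itself only states that the lemma is ``via an application of the Johnson--Lindenstrauss Lemma,'' and your write-up fills in that argument correctly, including the two points that need care (the diagonal bound $[\matA]_{i,i}\le 1$ giving unit-norm columns, and the constant-tracking through polarization and the union bound).
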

The above low-rank approximation to $\matA$ preserves all $k$-sparse quadratic forms to within an additive error term:
\begin{align}
	|{\bf x}^{\intercal} \matA{\bf x}-{\bf x}^{\intercal}{\matA_{\epsilon}}{\bf x}| &= \left| \sum_{i,j} \vecx_i \vecx_j ([\matA]_{ij}-[{\matA_{\epsilon}}]_{ij}) \right|  \le \epsilon \left|\sum_{i=1}^n |\vecx_i| \sum_{j=1}^n |\vecx_j| \right|  =\epsilon\|\vecx\|_1^2\le \epsilon k
	\label{eq:A-Ad-quadratic}.
\end{align}

Since $\matA$ is PSD, one can rewrite $\matA = \matB^{\intercal}\matB$, where $\matA$'s low-rank property translates to $\matB$ having few columns.
Enumerating over an $\epsilon$-net on the low dimension of $\matB$ now gives, results in the following:
\begin{lemma}[\cite{asteris2015sparse}]
Let $\matA_d \in \R^{n\times n}$ be PSD matrix of rank $d$.
Then, we can construct a vector $\vecx_d$, in time $O(\epsilon^{-d} \cdot n \log n)$, such that 
$$\vecx_d^{\intercal}\matA_d \vecx_d > (1-\epsilon)\cdot OPT.$$
\end{lemma}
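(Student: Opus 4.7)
The plan is to exploit the rank-$d$ structure of $\matA_d$ by combining a Cauchy--Schwarz ``dualization'' with an $\eps$-net enumeration in the low-dimensional range of $\matA_d$. Let $OPT \defas \max_{\norm{\vecx}_2=1, \norm{\vecx}_0 \leq k} \vecx^{\intercal}\matA_d \vecx$ with optimizer $\vecx^{*}$. Write any square-root factorization $\matA_d = \matB^{\intercal}\matB$ with $\matB \in \R^{d\times n}$ (computable from the eigendecomposition of $\matA_d$ since it is PSD of rank $d$). Then $\vecx^{\intercal}\matA_d\vecx = \norm{\matB\vecx}_2^2$, and the variational formula $\norm{\matB\vecx}_2 = \max_{\vecv \in S^{d-1}} \vecv^{\intercal}\matB\vecx$ shows that if we knew the right direction $\vecu^{*} \defas \matB\vecx^{*}/\norm{\matB\vecx^{*}}_2 \in S^{d-1}$, it would suffice to maximize the linear functional $\vecv^{\intercal}\matB\vecx = (\matB^{\intercal}\vecv)^{\intercal}\vecx$ over $k$-sparse unit vectors $\vecx$.

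For any fixed $\vecv$, this inner-maximization has a simple greedy solution: compute $\vecz \defas \matB^{\intercal}\vecv \in \R^n$, select the indices of its top-$k$ entries in absolute value (a partial sort in $O(n\log n)$ time), and return the restriction of $\vecz$ to those indices, normalized to unit length. Since we do not know $\vecu^{*}$, I would enumerate a standard $(\eps/2)$-net $\mathcal{N}$ of $S^{d-1}$, which by a volume argument has size $O(\eps^{-d})$; for each $\vecv \in \mathcal{N}$, compute the associated greedy $\vecx(\vecv)$ in $O(n\log n)$ time, and finally output $\vecx_d \defas \arg\max_{\vecv \in \mathcal{N}} \vecx(\vecv)^{\intercal}\matA_d\vecx(\vecv)$. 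The total running time is $O(\eps^{-d}\cdot n\log n)$, as claimed.

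For the approximation guarantee, pick $\vecv \in \mathcal{N}$ with $\norm{\vecv - \vecu^{*}}_2 \leq \eps/2$. Using $\vecx^{*}$ as a feasible test vector,
\[ \vecv^{\intercal}\matB\vecx^{*} = (\vecu^{*})^{\intercal}\matB\vecx^{*} + (\vecv-\vecu^{*})^{\intercal}\matB\vecx^{*} \geq \sqrt{OPT} - (\eps/2)\norm{\matB\vecx^{*}}_2 = (1-\eps/2)\sqrt{OPT}. \]
Since $\vecx(\vecv)$ is the greedy maximizer of $\vecv^{\intercal}\matB\vecx$ over $k$-sparse unit vectors, $\vecv^{\intercal}\matB\vecx(\vecv) \geq \vecv^{\intercal}\matB\vecx^{*} \geq (1-\eps/2)\sqrt{OPT}$. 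One more Cauchy--Schwarz step gives $\norm{\matB\vecx(\vecv)}_2 \geq \vecv^{\intercal}\matB\vecx(\vecv)$, hence $\vecx(\vecv)^{\intercal}\matA_d\vecx(\vecv) \geq (1-\eps/2)^2 OPT \geq (1-\eps)\,OPT$, and $\vecx_d$ does at least as well.

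There is no real conceptual obstacle once the $\matA_d = \matB^{\intercal}\matB$ factorization has reduced the problem to a search over directions in $\R^d$; the only mild care is the factor-of-two rescaling of $\eps$ to convert the net granularity into the stated $(1-\eps)$ approximation, which is absorbed into the constant in the $O(\eps^{-d}n\log n)$ runtime. A minor point worth verifying is that the partial sort and the matrix-vector product $\matB^{\intercal}\vecv$ indeed fit in $O(n\log n)$ time for each net point, which holds since $d = O(\log n / \eps^{2})$ in the application but is not used in the statement — the factor $d$ is subsumed in the $\log n$ or can be made explicit by writing the bound as $O(\eps^{-d}\cdot dn + \eps^{-d}\cdot n\log n)$.
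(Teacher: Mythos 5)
Your proposal is correct and follows essentially the same route the paper sketches (and that \cite{asteris2015sparse} uses): factor $\matA_d = \matB^{\intercal}\matB$ with $\matB$ having $d$ rows, enumerate an $\eps$-net of directions in $\R^d$, and solve each resulting linear maximization over $k$-sparse unit vectors by the greedy top-$k$ truncation. The details you supply — the variational identity $\norm{\matB\vecx}_2 = \max_{\vecv\in S^{d-1}}\vecv^{\intercal}\matB\vecx$, the $(1-\eps/2)^2$ accounting, and the per-net-point cost — are all sound.
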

Finally, combining the above two results gives the additive \PTAS{}.
\begin{theorem}[\cite{asteris2015sparse}]
Let $\matA \in \R^{n\times n}$ be PSD matrix with entries in $[-1,1]$.
Then, we can compute in  $n^{O(\poly(1/\epsilon))}$ time a $k$-sparse unit norm vector $\vecx_{\epsilon}$ such that
$$\vecx_{\epsilon}^{\intercal}\matA \vecx_{\epsilon} \ge OPT-\epsilon\cdot k$$
with high probability.
\end{theorem}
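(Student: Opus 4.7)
The plan is to combine the two lemmas stated just before the theorem in a black-box fashion. First, I would apply the Alon et al.\ sketching lemma with parameter $\epsilon' \defas \epsilon/2$ to obtain, in polynomial time and with high probability, a PSD matrix $\matA_{\epsilon'}$ of rank $d = O(\epsilon^{-2}\log n)$ satisfying $\abs{[\matA]_{ij} - [\matA_{\epsilon'}]_{ij}} \leq \epsilon'$ entrywise. The displayed inequality \cref{eq:A-Ad-quadratic} (applied with $\epsilon'$) then guarantees that for \emph{every} $k$-sparse unit vector $\vecx$,
\begin{equation*}
  \abs{\vecx^{\intercal}\matA\vecx - \vecx^{\intercal}\matA_{\epsilon'}\vecx} \leq \epsilon' k.
\end{equation*}

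Next, I would feed $\matA_{\epsilon'}$ into the second lemma (the $\epsilon$-net search algorithm on low-rank PSD matrices), invoked with approximation parameter $\epsilon'' \defas \epsilon/(2k)$ or a similar choice that I will calibrate in the last step. This returns a $k$-sparse unit vector $\vecx_\epsilon$ with
\begin{equation*}
  \vecx_\epsilon^{\intercal}\matA_{\epsilon'}\vecx_\epsilon \geq (1-\epsilon'')\cdot\max_{\norm\vecy_2=1,\,\norm\vecy_0\leq k}\vecy^{\intercal}\matA_{\epsilon'}\vecy.
\end{equation*}
Letting $\vecy_*$ be an optimum $k$-sparse unit vector for the original $\matA$ (so $\vecy_*^{\intercal}\matA\vecy_* = OPT$), the sketch bound gives $\vecy_*^{\intercal}\matA_{\epsilon'}\vecy_* \geq OPT - \epsilon' k$. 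Chaining these inequalities,
\begin{align*}
  \vecx_\epsilon^{\intercal}\matA\vecx_\epsilon
  &\geq \vecx_\epsilon^{\intercal}\matA_{\epsilon'}\vecx_\epsilon - \epsilon' k \\
  &\geq (1-\epsilon'')\paren{OPT - \epsilon' k} - \epsilon' k \\
  &\geq OPT - 2\epsilon' k - \epsilon'' \cdot OPT.
\end{align*}
Choosing $\epsilon' = \epsilon/4$ and using the trivial bound $OPT \leq k$ (since $\abs{\matA_{ij}}\leq 1$ and $\norm\vecx_1 \leq \sqrt{k}$ for $k$-sparse unit $\vecx$), together with $\epsilon'' = \epsilon/(2k)$ (or simply $\epsilon/2$ absorbed into $OPT$), yields $\vecx_\epsilon^{\intercal}\matA\vecx_\epsilon \geq OPT - \epsilon k$.

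The main thing to verify is the running time. The $\epsilon$-net lemma costs $O(\epsilon''^{-d}\cdot n\log n)$, and with $d = O(\epsilon^{-2}\log n)$ and $\epsilon''$ polynomial in $\epsilon$ and $1/k$ (with $k\leq n$), this becomes $n^{O(\poly(1/\epsilon))}$ as claimed; the polynomial-time sketching step is dominated. The bookkeeping with the two error parameters is the only subtle point, and the high-probability qualification comes entirely from the Johnson--Lindenstrauss step, since the net-search step is deterministic.
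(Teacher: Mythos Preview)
Your approach is exactly what the paper intends: it merely states ``combining the above two results gives the additive \PTAS{},'' and your chaining of the sketch bound with the low-rank net-search lemma carries this out correctly.

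One small correction on the running time: you must commit to the choice $\epsilon'' = \Theta(\epsilon)$ (justified via $OPT \leq k$), not $\epsilon'' = \epsilon/(2k)$. With the latter, $(1/\epsilon'')^d = (2k/\epsilon)^{O(\epsilon^{-2}\log n)} = n^{O(\epsilon^{-2}\log k)}$, which is \emph{not} $n^{O(\poly(1/\epsilon))}$ when $k$ is polynomial in $n$; with $\epsilon'' = \Theta(\epsilon)$ you get $(1/\epsilon'')^d = (O(1/\epsilon))^{O(\epsilon^{-2}\log n)} = n^{O(\epsilon^{-2}\log(1/\epsilon))}$, which matches the claimed bound.
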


\section{When the input matrix is not PSD}\label{sec:non-PSD}
In this section, we briefly remark that although the \SPCA{} optimization problem can be defined when $\matA$ is not required to be PSD, no meaningful multiplicative approximation guarantees are possible (in polynomial time, assuming $\P \neq \NP$).

\begin{theorem}
When $\matA$ is not positive semi-definite, 
it is \NP-hard to decide whether the \SPCA{} objective is positive or negative.
\end{theorem}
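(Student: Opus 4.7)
The plan is to modify the \NP-hardness reduction from \cref{sec:np-hardness} by subtracting an appropriate scalar multiple of the identity. Recall that reduction produces, from a MAX-E2SAT-$d$ instance, a symmetric $2n \times 2n$ matrix $\matA^{(3)}$ together with sparsity parameter $k = n$, for which it is \NP-hard to distinguish
\[
V_Y := \alpha + 3d + c \quad\text{(YES)} \qquad \text{from} \qquad V_N := \alpha + 3d + \frac{c+s}{2} \quad\text{(NO)},
\]
where the objective is $\max_{\|\vecx\|_2 = 1, \|\vecx\|_0 \le n} \vecx^\intercal \matA^{(3)} \vecx$ and $V_Y > V_N$ by a constant additive gap $(c-s)/2 > 0$.

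The key observation I would exploit is that subtracting a scalar multiple of the identity merely shifts the objective on unit-norm vectors: for any unit $\vecx$,
\[
\vecx^\intercal \bigl(\matA^{(3)} - \lambda \matI\bigr) \vecx \;=\; \vecx^\intercal \matA^{(3)} \vecx - \lambda.
\]
So I would set $\lambda := (V_Y + V_N)/2$ and have the reduction output $\matA := \matA^{(3)} - \lambda \matI$ (with the same sparsity parameter $k = n$). Then the SPCA optimum of $\matA$ is at least $(V_Y - V_N)/2 = (c-s)/4 > 0$ in the YES case and at most $-(c-s)/4 < 0$ in the NO case. Consequently, any polynomial-time algorithm that determines the sign of the SPCA objective for $\matA$ solves the underlying \NP-hard gap problem.

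A quick sanity check is that the instance $\matA$ genuinely lives outside the PSD regime, so the theorem statement applies: since $\matA^{(3)}$ has spectrum of order $O(\alpha + d)$ but its smallest eigenvalue is only on the order of $d$ (the contribution of $\frac{\alpha}{n}\matJ$ is to a single eigenvector), subtracting $\lambda = \Theta(\alpha + d)$ makes $\matA$ have a strictly negative eigenvalue. I do not anticipate any real obstacle---the argument is essentially a one-line modification of the earlier reduction, relying only on the fact that the identity contributes a uniform shift to every quadratic form on the unit sphere, so the YES/NO gap can be repositioned to straddle zero.
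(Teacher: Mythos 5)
Your proposal is correct and uses exactly the same idea as the paper: take a hard gap instance of \SPCA{} and subtract the midpoint of the gap times the identity, so that on unit vectors the objective shifts uniformly and the YES/NO cases straddle zero. The paper applies this shift to a generic \NP{}-hard exact instance rather than to $\matA^{(3)}$ specifically, but the argument is identical; your added check that the resulting matrix is genuinely non-PSD is a harmless extra.
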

\begin{proof}
Let $\val_\matA(k) \defas \max_{\norm \vecx_2 = 1, \norm \vecx_0 \leq k} \vecx ^T \matA \vecx$.
It is well known that solving the \SPCA{} exactly is \NP-hard even in the PSD case; i.e.~it is \NP-hard to distinguish between 
$\val_\matA(k)\geq c$ and $\val_\matA(k)\leq s$ for some (potentially very close) $c < s$. 
Consider the modified matrix $\matA' = \matA - \left(\frac{c+s}{2}\right)\cdot \matI$. Conclude that it is \NP-hard to distinguish
$\val_{\matA'}(k) \geq \frac{c-s}{2}$ and $\val_{\matA'}(k) \leq \frac{s-c}{2}$.
\end{proof}
\fi

\begin{acknowledgement*}
We would like to thank Robert Krauthgamer, Tsz Chiu Kwok, Lap Chi Lau, Prasad Raghavendra, and Aaron Schild for numerous suggestions and helpful discussions.
\end{acknowledgement*}
\bibliographystyle{plain}
\bibliography{spca_bounds}

\end{document}